\pdfoutput=1

\documentclass[11pt]{article}

\usepackage{acl}

\usepackage{times}
\usepackage{latexsym}

\usepackage[T1]{fontenc}

\usepackage[utf8]{inputenc}

\usepackage{microtype}

\usepackage{inconsolata}

\usepackage{graphicx}

%
%

\usepackage{amsthm}
\usepackage{amsmath}
\usepackage{amsfonts}
\usepackage{multirow}
\usepackage{booktabs}
\usepackage{cleveref}
\newtheorem{theorem}{Theorem}

\newcommand{\TheName}{SGDPO}

\title{\TheName{}: Self-Guided Direct Preference Optimization for Language Model Alignment}

\author{
  \textbf{Wenqiao Zhu \textsuperscript{$\dagger$, 1, 2}},
  \textbf{Ji Liu\textsuperscript{$\dagger$, 1, *}},
  \textbf{Lulu Wang \textsuperscript{1}},
  \textbf{Jun Wu\textsuperscript{1}},
  \textbf{Yulun Zhang\textsuperscript{2}}
\\
  \textsuperscript{1} HiThink Research,
  \textsuperscript{2} Shanghai Jiao Tong University \\
}


\begin{document}
\maketitle
\def\thefootnote{$\dagger$}\footnotetext{Equal Contribution}
\def\thefootnote{*}\footnotetext{Corresponding author: jiliuwork@gmail.com }

\begin{abstract}
Direct Preference Optimization (DPO) is broadly utilized for aligning Large Language Models (LLMs) with human values because of its flexibility. Despite its effectiveness, it has been observed that the capability of DPO to generate human-preferred response is limited and the results of DPO are far from resilient. To address these limitations, in this paper we propose a novel Self-Guided Direct Preference Optimization algorithm, i.e., \TheName{}, which incorporates a \textit{pilot} term to steer the gradient flow during the optimization process, allowing for fine-grained control over the updates of chosen and rejected rewards. We provide a detailed theoretical analysis of our proposed method and elucidate its operational mechanism. Furthermore, we conduct comprehensive experiments on various models and benchmarks. The extensive experimental results demonstrate the consistency between the empirical results and our theoretical analysis and confirm the effectiveness of our proposed approach (up to 9.19\% higher score).
\end{abstract}

\section{Introduction}

Large Language Models (LLMs) pretrained with next-token prediction have experienced rapid advancements \citep{GPT_4o_blog, deepseekai2025deepseekr1incentivizingreasoningcapability,Claude_3_5_Sonnet_blog,Gemini_1_5_blog}. This progress underscores the necessity to align LLM outputs with human values and preferences while safeguarding societal values from harm. Reinforcement Learning from Human Feedback (RLHF) has emerged as a critical method for achieving this alignment and has become an essential component within the LLM training pipeline \citep{Stiennon@LearningFromHF@2020, bai@2022@HF, Bi@deepseek@2024}.

Traditional RLHF typically involves three key steps: Supervised Fine-Tuning (SFT), reward learning, and Reinforcement Learning (RL) optimization. Because the RL optimization step relies heavily on the reward model, it is essential to train a high-quality reward model. However, this necessity adds complexity to the RLHF training process, making it intricate \citep{Ilyas2020A, Engstrom2020Implementation}.
To tackle this issue, Direct Preference Optimization (DPO) removes the need for reward training by reparameterizing the reward model \cite{rafailov2024direct}. Specifically, it maps reward functions to optimal policies by employing the Bradley-Terry Model \citep{Bradley1952RankAO}, thereby transforming preference feedback from online reward models into offline implicit modeling. As a result, DPO simplifies the post-training process.

\begin{figure*}[t]
\centering
\begin{tabular}{cccc}
  \includegraphics[width=0.23\linewidth]{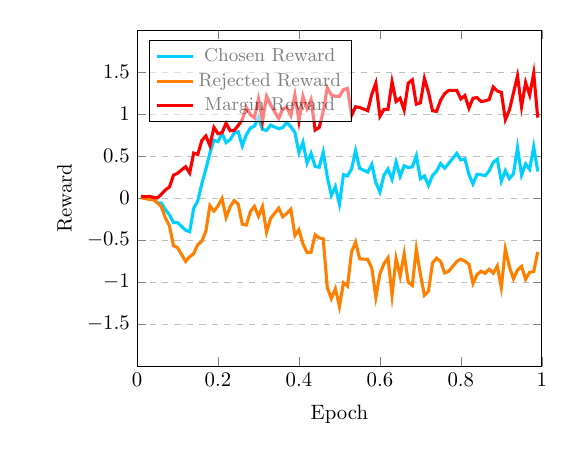} & 
  \includegraphics[width=0.23\linewidth]{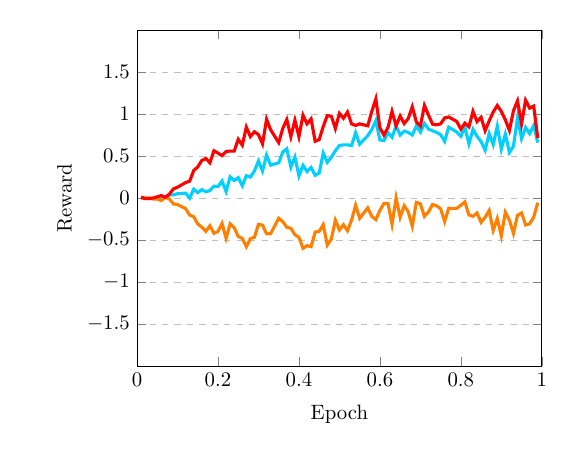} &
  \includegraphics[width=0.23\linewidth]{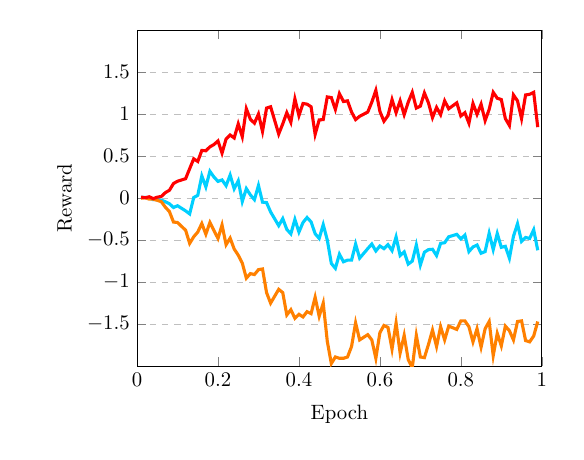} &
  \includegraphics[width=0.23\linewidth]{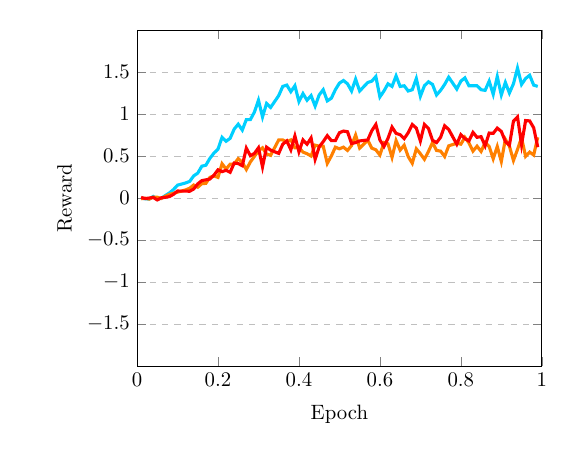} \\
  {\small \hspace{1.8em} (a)} & {\small \hspace{1.8em}(b)} & {\small \hspace{1.8em} (c)} & {\small \hspace{1.8em} (d)} \\
  \includegraphics[width=0.23\linewidth]{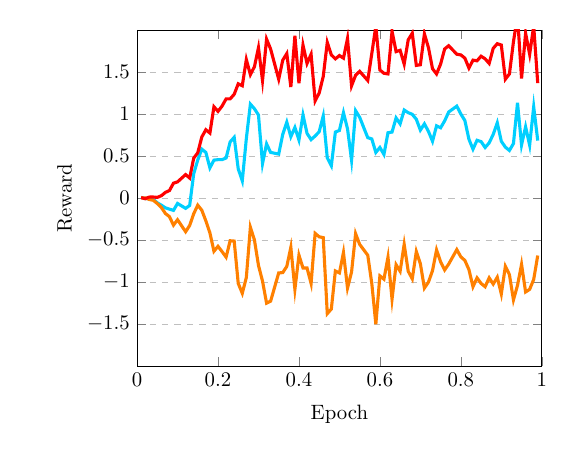} & 
  \includegraphics[width=0.23\linewidth]{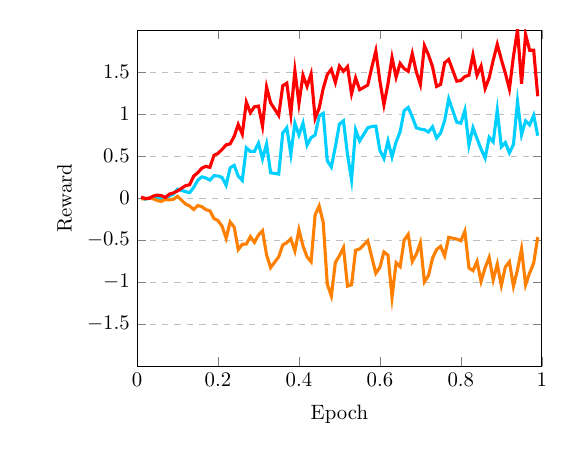} &
  \includegraphics[width=0.23\linewidth]{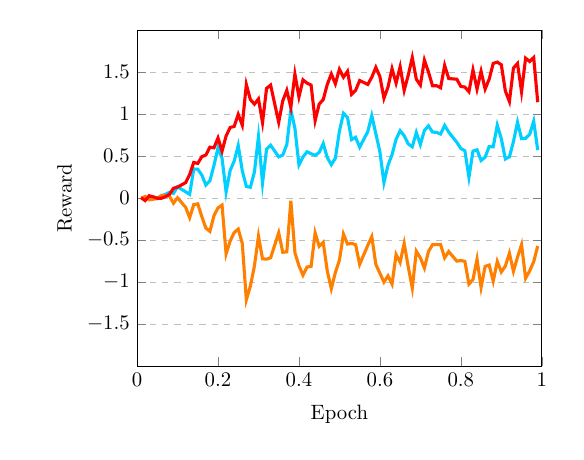} &
  \includegraphics[width=0.23\linewidth]{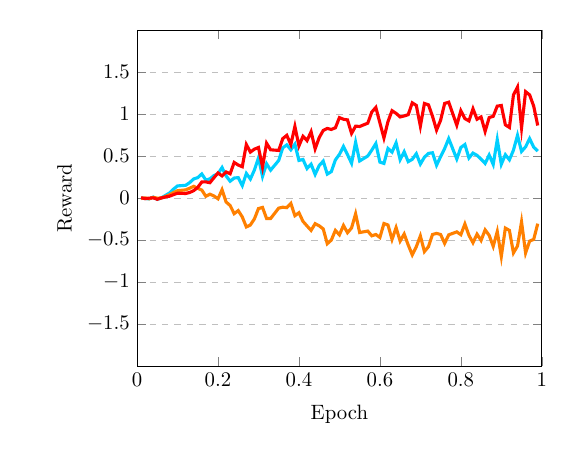} \\
{\small \hspace{1.8em} (e)} &{\small \hspace{1.8em} (f)} & {\small \hspace{1.8em} (g)} & {\small \hspace{1.8em} (h)} \\  
\end{tabular}
\caption{
Reward curves on various base models: (a) DPO reward curves on Llama-3.1 instruct 8B; (b) DPO reward curves on Llama-3.1 base 8B; (c) DPO reward curves on Qwen-2 instruct 7B; (d) DPO reward curves on Qwen-2 base 7B.
(e) Our \TheName{} reward curves on Llama-3.1 instruct 8B; (f) Our \TheName{} reward curves on Llama-3.1 base 8B; (g) Our \TheName{} reward curves on Qwen-2 instruct 7B; (h) Our \TheName{} reward curves on Qwen-2 base 7B.
}
\label{fig:dpo_pilot_cmp}
\end{figure*}

While it has been widely adopted for its flexibility with similar performance levels compared to classic RLHF methods, e.g., PPO \citep{dubois2023alpacafarm}, ChatGLM-RLHF \cite{Hou2024ChatGLMRLHFPO}, the limitations of DPO are observed in a bunch of investigation, which lead to suboptimal alignment performance in LLM training. These limitations include high computational costs \citep{Ethayarajh2024KTOMA,hong-etal-2024-orpo,meng2024simpo}, verbosity \citep{park-etal-2024-disentangling,liu2024lddpo,Lu2024EliminatingBL}, and overfitting \citep{azar2023general,jung2024BC,azar24aipo}. In addition, DPO may still incur inferior capability of LLMs in producing responses that resonate with human preferences \citep{feng2024@dpo_grad_flow}. While LLMs trained with DPO tend to avoid generating responses humans dislike, they struggle to generate responses that humans prefer. 
Furthermore, the efficacy of DPO is inconsistent while being sensitive to the effectiveness of Supervised Fine-Tune (SFT) \citep{feng2024@dpo_grad_flow,xu24cpo}. For instance, LLMs with improper and ineffective settings may lead to poor DPO performance. As illustrated in Figures \ref{fig:dpo_pilot_cmp} (a), (b), (c), and (d), the training reward curves of DPO on various base models using the same preference dataset exhibit extremely high diversity.

Recently, some theoretical works \citep{Pal2024SmaugFF,feng2024@dpo_grad_flow} reveal the reasons behind the limitations of DPO. First, the standard DPO loss can lead to a reduction in the likelihood of preferred examples generated by LLMs \citep{Pal2024SmaugFF}, especially when the Hamming distance between preferred and dispreferred responses is low. Second, the limitations of DPO may be attributed to undesired distinct update patterns in gradient flow between chosen and rejected rewards \citep{feng2024@dpo_grad_flow}. When the optimization process enters an undesired region, the gradient flow of DPO tends to generate an unbalanced update to different variables or incurs difficulties in escaping saddle points, leading to inferior optimization performance.

In this paper, we propose a novel Self-Guided Direct Preference Optimization algorithm, i.e., \TheName{}, to address the aforementioned limitations. We introduce a \textit{pilot} term into the objective function of \TheName{}. This \textit{pilot} term can be adjusted to steer gradient updates towards different regions, resulting in diverse gradient update patterns and consequently leading to distinct optimization processes. In this case, \TheName{} can enhance the alignment capability of LLMs to generate responses preferred by humans, while contributing to the stabilization and resilience of the LLM training process, as well. In addition, we carry out a detailed theoretical analysis to illustrate the robustness and resilience of \TheName{}. Furthermore, we conduct extensive experiments across various models and benchmarks to demonstrate the superb performance of \TheName{}. The major contributions are summarized as follows:
\begin{itemize}
\item We propose a novel preference alignment algorithm, i.e., Self-Guided Direct Preference Optimization (\TheName{}), designed to stabilize the LLM training process and enhance the capability of LLMs so as to generate responses preferred by humans. By incorporating a \textit{pilot} term into the objective function, \TheName{} guides the gradient flow to balanced updates, thereby improving the updates of chosen and rejected rewards.
\item We provide a thorough theoretical analysis of \TheName{}, elucidating its underlying mechanisms for its robustness and resilience. This analysis offers a scheme for controlling the updates of chosen and rejected rewards as well.
\item We conduct extensive experiments across 4 models and 8 benchmarks. Experimental results demonstrate the alignment between our theoretical analysis and empirical observation, which validates the effectiveness of \TheName{}. Specifically, our method has achieved a significant improvement over the DPO method, with the relative increase reaching up to a maximum of 9.19\%.
\end{itemize}

\section{Related Work}

RLHF has been proven effective in aligning LLMs with human values and has seen widespread adoption across various applications, e.g., summarization \citep{Stiennon@LearningFromHF@2020}, safety alignment \citep{bai@2022@HF}, instruction following \citep{ouyang22intructions}, and translation \citep{xu24cpo}. Nevertheless, RLHF requires a complex training pipeline, which has spurred the proposal of DPO \citep{rafailov2024direct} to simplify the LLM training pipeline.

Since the introduction of DPO, a variety of extensions have been proposed to either address its limitations or provide theoretical interpretations. These include new preference optimization techniques \citep{Cal-DPO2024, zeng2024tokenlevel, razin2025unintentional} and analytical studies \citep{Pal2024SmaugFF, feng2024@dpo_grad_flow}. For instance, SimPO \citep{Lu2024EliminatingBL} reduces computational overhead by adopting a reference-free training strategy, while SimPER \citep{simper2025} introduces an inverse perplexity objective to lower the complexity and fine-tuning time of large language models (LLMs). Although SimPER results in a smaller decrease in chosen likelihoods compared to SimPO, it still exhibits a declining trend in chosen likelihoods, indicating limited flexibility. In contrast, our method introduces a mechanism that allows for adjustable control over both chosen and rejected likelihoods, thereby offering greater adaptability.

Similarly, SamPO and LD-DPO \citep{liu2024lddpo} aim to reduce the verbosity often introduced by alignment algorithms due to prior biases in preference data, ultimately improving alignment performance. TDPO \citep{zeng2024tokenlevel} enhances alignment and diversity through a token-level optimization approach. IPO \citep{azar24aipo} mitigates overfitting by introducing a regularization term that pulls the solution toward a reference policy. Cal-DPO \citep{Cal-DPO2024}, on the other hand, improves performance by incorporating absolute reward values instead of relying solely on relative ones—similar in spirit to IPO’s regularization goal. However, in many practical scenarios, exact absolute reward values may not be available, requiring approximations that can lead to suboptimal outcomes. Our method avoids this issue entirely, as it does not depend on absolute reward values and instead provides additional flexibility through tunable parameters for both chosen and rejected likelihoods, as well as their ratios.

Unintentional Unalignment \citep{razin2025unintentional} investigates how similar embeddings from preference data can lead to unintended misalignment. The authors introduce a metric called Centered Hidden Embedding Similarity (CHES) to improve training sample selection. While this approach is promising for dataset curation, our method operates at the optimization level rather than the data level, making it more robust and independent of dataset modifications. Additionally, NCA \citep{chen2024noise} leverages Noise Contrastive Estimation (NCE) to achieve robust alignment, and BCO \citep{Jung2024bco} proposes training a binary classifier where the logit serves as a reward signal, also yielding robust results.

Despite these advances, none of the existing methods effectively tackle both the issue of reduced updates on preferred examples and the challenge of unbalanced updates with difficulties in escaping saddle points simultaneously. Our approach addresses both concerns, offering a more comprehensive and flexible solution to preference-based alignment.

\section{Method}
\subsection{Preliminary of DPO}

DPO is a widely adopted technique for optimizing the preferences of LLMs. This method stands out because of the innovative utilization of an analytical mapping that translates reward functions into optimal policies, streamlining the alignment process without necessitating a direct reward model. The cornerstone of DPO lies in its specific transformation, which can be mathematically formulated by the following equation:
\begin{equation}%
\label{eq:reward}
r(x,y) = \beta \log \frac{\pi_\theta(y|x)}{\pi_\text{ref}(y|x)} + \beta \log Z(x),%
\end{equation}%
where {\(r(x,y)\)} is the reward function, {\(\beta\)} serves as a scaling factor, {\(\pi_\theta(y|x)\)} represents the policy inferred from the reward model, and {\(\pi_\text{ref}(y|x)\)} indicates the reference policy. Here, {\(Z(x)\)} functions as a normalization constant ensuring the probabilities are properly scaled.

By leveraging the Bradley-Terry preference model \cite{bradley1952rank}, DPO expresses the probability that chosen outcome {$y_w$} is preferred over rejected outcome {$y_l$}, given an input prompt instruction {$x$}, as formulated as follows:%
\begin{equation}%
\label{eq:compare}
p(y_w > y_l | x) = \frac{\exp\left(r(x,y_w)\right)}{\exp\left(r(x,y_w)\right) + \exp\left(r(x,y_l)\right)}.%
\end{equation}%
The Formula \ref{eq:compare} quantifies the relative preference between two responses by comparing their associated reward values. Within this probabilistic framework, the loss function of DPO, denoted by { $\mathcal{L}_{DPO}$}, is formulated as Formula \ref{eq:DPOLoss}:
\begin{equation}%
\label{eq:DPOLoss}%
\mathcal{L}_{DPO} = -\mathbb{E}_{(x, y_w, y_l) \sim \mathcal{D}}\left[l_\text{DPO}(\pi_\theta, \pi_\text{ref})\right],%
\end{equation}%
where { \(l_\text{DPO}(\pi_\theta, \pi_\text{ref})\)} is defined by:
\begin{equation}%
\label{eq:ldpo}
l_\text{DPO}(\pi_\theta, \pi_\text{ref}) =  \log \sigma (\Delta)
\end{equation}
Here
\begin{equation}
\Delta  =  \beta \left[\log \frac{\pi_\theta(y_w | x)}{\pi_\text{ref}(y_w |x)}  - \log \frac{\pi_\theta(y_l | x)}{\pi_\text{ref}(y_l |x)}\right]
\end{equation}
\(\sigma\) represents the sigmoid function, and \(\beta\) serves as a scaling factor as that in Formula \ref{eq:reward}. Formulas \ref{eq:DPOLoss} and \ref{eq:ldpo} thereby encapsulate the principles of the Bradley-Terry model, integrating preference data into the learning process. In this way, DPO ensures that the responses of LLMs align with observed human preferences.

\subsection{Optimization Process of DPO}
\label{subsec:DPO}

Given the chosen reward { \(\mathcal{X}_1 = \frac{\pi_\theta(y_w|x)}{\pi_\text{ref}(y_w|x)}\)} and the rejected reward { \(\mathcal{X}_2 = \frac{\pi_\theta(y_l|x)}{\pi_\text{ref}(y_l|x)}\)}, the partial derivatives of { \(l_\text{DPO}\)} with respect to { \(\mathcal{X}_1\)} and { \(\mathcal{X}_2\)} are calculated in Formulas \ref{eq:pd1} and \ref{eq:pd2} \citep{feng2024@dpo_grad_flow}:
\begin{small}
\begin{equation}
\label{eq:pd1}
\frac{\partial l_{\text{DPO}}}{\partial \mathcal{X}_1} = \frac{\beta \mathcal{X}_2^\beta}{\mathcal{X}_1 (\mathcal{X}_1^\beta + \mathcal{X}_2^\beta)},
\end{equation}
\end{small}

\begin{small}
\begin{equation}
\label{eq:pd2}
\frac{\partial l_{\text{DPO}}}{\partial \mathcal{X}_2} = -\frac{\beta \mathcal{X}_2^{\beta - 1}}{\mathcal{X}_1^\beta + \mathcal{X}_2^\beta}.
\end{equation}
\end{small}

Furthermore, the ratio of the increase in the probability of a human-preferred response to the decrease in the probability of a human-dispreferred response is given by:

\begin{small}
\begin{equation}
\left| \frac{\partial l_{\text{DPO}}/\partial \mathcal{X}_1}{\partial l_{\text{DPO}}/\partial \mathcal{X}_2} \right| = \frac{\mathcal{X}_2}{\mathcal{X}_1}
\label{eq:ratio}
\end{equation}
\end{small}

The DPO gradient flow for the chosen and rejected rewards are shown in Figure \ref{fig:dpo_grad}. Based on the theoretical framework outlined above and this figure, we can make the following observations:

\begin{itemize}
\item When {\(\mathcal{X}_2\)} is small, as illustrated in the lower part of Figure \ref{fig:dpo_grad}, the DPO gradient flow tends to decrease {\(\mathcal{X}_2\)} rapidly while making only minor adjustments to {\(\mathcal{X}_1\)}. This behavior limits the ability of LLMs to effectively generate highly preferred responses.
\item As DPO optimization progresses, the chosen reward { \(\mathcal{X}_1\)} increases while the rejected reward { \(\mathcal{X}_2\)} decreases. Consequently, { \(\frac{\mathcal{X}_2}{\mathcal{X}_1} < 1\)}. According to Equation \ref{eq:ratio}, this results in the gradient for the rejected reward being updated more quickly than that for the chosen reward.
\end{itemize}

\begin{figure}[t]
\includegraphics[width=\linewidth]{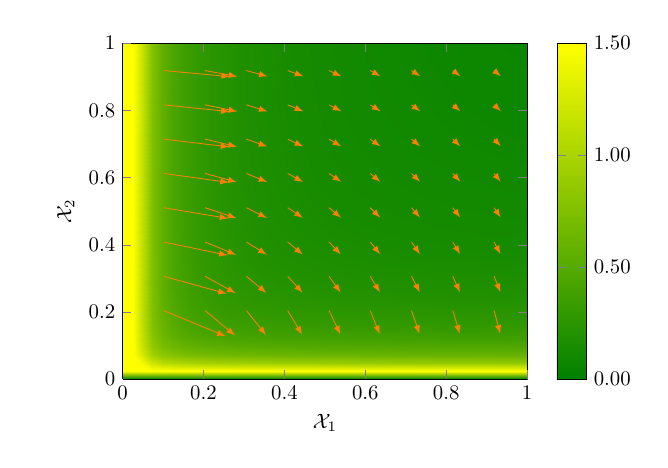}
\caption{Gradient flow of DPO ($\beta=0.1$) with large values truncated at 1.5.}
\label{fig:dpo_grad}
\end{figure}

\subsection{\TheName{}}
\label{sec:g1g2}
The theoretical framework discussed above suggests several directions for improving DPO:

\begin{itemize}
\item \textbf{G1:} Prevent the rejected reward {\(\mathcal{X}_2\)} from rapidly dropping to a very small value, which otherwise halts meaningful updates to the chosen reward { \(\mathcal{X}_1\)} or enhance the gradient update of chosen rewards \(\mathcal{X}_2\).
\item \textbf{G2:} Increase the ratio in Equation \ref{eq:ratio} to allow for more substantial updates to the chosen reward, thereby enhancing the capability of LLMs to generate preferred responses.
\end{itemize}
These adjustments aim to refine the optimization process of RLHF and enhance the performance of LLM in aligning with human preferences.

To achieve the aforementioned goals, we propose incorporating an adjusted preference optimization objective in the loss function of \TheName{} as defined in Formula \ref{eq:pilot}:
\begin{equation}
\label{eq:pilot}
\mathcal{L}_{pilot} := -\frac{1}{2}\mathbb{E}_{(x, y_w, y_l \sim \mathcal{D})}\left[l_{pilot}(\pi_\theta, \pi_\text{pilot})\right],
\end{equation}
where $l_{pilot}(\pi_\theta, \pi_\text{pilot})$ is defined in Formula \ref{eq:pilotDef}.
\begin{align}
\label{eq:pilotDef}
& l_{pilot}(\pi_\theta, \pi_\text{pilot}) :=  \nonumber\\
&  \log \sigma \left(
    \beta \log \frac{\pi_\theta(y_w | x)}{\pi_\text{ref}(y_w |x)}  - 
    \beta \log \frac{\pi_\text{pilot}(\hat{y}_l | x)}{\pi_\text{ref}(\hat{y}_l |x)}
    \right) \nonumber \\
    &+
    \log \sigma \left(
    \beta \log \frac{\pi_\text{pilot}(\hat{y}_w | x)}{\pi_\text{ref}(\hat{y}_w |x)} - 
    \beta \log \frac{\pi_\theta(y_l | x)}{\pi_\text{ref}(y_l |x)}
    \right),
\end{align}
where { $\hat{y}_w$} and {$\hat{y}_l$} denote the sub-sequences of { $y_w$} and { $y_l$}, respectively. See details for the construction of {$\hat{y}_w$} and {$\hat{y}_l$} in Section \ref{sec:subsequence}. In order to simplify the calculations, we introduce
{
$\mathcal{Y}_1 = \frac{\pi_\text{pilot}(\hat{y}_w|x)}{\pi_\text{ref}(\hat{y}_w|x)}$
} and
{
$\mathcal{Y}_2 = \frac{\pi_\text{pilot}(\hat{y}_l|x)}{\pi_\text{ref}(\hat{y}_l|x)}$
}. 
Let us denote the length of the token sequence of $y$ by $T$, with $y_t$ representing the token at the $t$-th index and $y_{<t}$ denoting all tokens preceding the $t$-th index. Given that $\hat{y}$ is a subsequence of $y$, and considering $\pi_\star(y|x)$ can be represented as $\prod_{t=1}^T \pi_\star(y_t|y_{<t},x)$ on a token level basis, where $\star$ belongs to the set \{\text{pilot}, \text{ref}, $\theta$\}, we can express
$\mathcal{X}_2 = p_2 \mathcal{Y}_2$
and
$\mathcal{X}_1 = p_1 \mathcal{Y}_1,$
where $\mathcal{X}_1$ and $\mathcal{X}_2$ are defined in Section \ref{subsec:DPO}. $p_1$ and $p_2$ represent the product of the token probability ratios for the remaining tokens in sequences $\mathcal{X}_1$ and $\mathcal{X}_2$, excluding the sub-sequences $\mathcal{Y}_1$ and $\mathcal{Y}_2$.
  
The $\pi_\text{pilot}$ in Formula \ref{eq:pilotDef} is the guiding policy model to steer the reward updates,
we then demonstrate the advantages of the adjusted preference optimization objective to be leveraged to enhance preference optimization:
\begin{theorem}
\label{thm:pilot_grad}
The partial derivatives of { $l_\text{pilot}$} with respect to {$\mathcal{X}_1$} and { $\mathcal{X}_2$}
are given by:
\begin{small}
\begin{equation}
\label{eq:pilot_pd1}
\frac{\partial l_{\text{pilot}}}{\partial \mathcal{X}_1} =
\frac{
  \beta \mathcal{Y}_2^\beta
}{
  \mathcal{X}_1 (\mathcal{X}_1^\beta + \mathcal{Y}_2^\beta) 
}
\end{equation}
\end{small}

\begin{small}
\begin{equation}
\label{eq:pilot_pd2}
\frac{\partial l_{\text{pilot}}}{\partial \mathcal{X}_2} =
-\frac{
  \beta \mathcal{X}_2^{\beta - 1}
}{
  \mathcal{Y}_1^\beta +   \mathcal{X}_2^\beta
}
\end{equation}
\end{small}
\end{theorem}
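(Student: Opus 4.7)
The plan is to reduce Theorem~\ref{thm:pilot_grad} to essentially the same chain-rule computation that the paper uses to derive the DPO gradients in Formulas~\ref{eq:pd1} and \ref{eq:pd2}, with the only structural change being that one leg of each sigmoid is now a pilot quantity rather than a $\pi_\theta$ quantity. The key observation is that in $l_{\text{pilot}}$ the first summand depends on $\pi_\theta$ only through $\mathcal{X}_1$ (with $\mathcal{Y}_2$ entering as a constant supplied by $\pi_{\text{pilot}}$), while the second summand depends on $\pi_\theta$ only through $\mathcal{X}_2$ (with $\mathcal{Y}_1$ likewise constant). Hence $\partial l_{\text{pilot}}/\partial \mathcal{X}_1$ picks up only the first summand and $\partial l_{\text{pilot}}/\partial \mathcal{X}_2$ only the second, and throughout the calculation $\mathcal{Y}_1,\mathcal{Y}_2$ are treated as constants with respect to $\mathcal{X}_1,\mathcal{X}_2$.

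First, I would rewrite $l_{\text{pilot}}$ compactly in the $\mathcal{X},\mathcal{Y}$ notation as
\[
l_{\text{pilot}} = \log \sigma\!\left(\beta \log \frac{\mathcal{X}_1}{\mathcal{Y}_2}\right) + \log \sigma\!\left(\beta \log \frac{\mathcal{Y}_1}{\mathcal{X}_2}\right),
\]
and then invoke the elementary identity $\sigma(\beta \log(a/b)) = a^\beta/(a^\beta+b^\beta)$, which follows immediately from the definition of $\sigma$, to linearise each log-sigmoid into
\[
l_{\text{pilot}} = \beta \log \mathcal{X}_1 - \log(\mathcal{X}_1^\beta + \mathcal{Y}_2^\beta) + \beta \log \mathcal{Y}_1 - \log(\mathcal{Y}_1^\beta + \mathcal{X}_2^\beta).
\]
This form is convenient because every $\mathcal{X}_i$ now appears only inside a logarithm, so the differentiation reduces to a handful of $\log$-derivatives.

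Second, I would differentiate termwise. For $\mathcal{X}_1$, only the first two terms contribute, producing $\beta/\mathcal{X}_1 - \beta \mathcal{X}_1^{\beta-1}/(\mathcal{X}_1^\beta + \mathcal{Y}_2^\beta)$; combining over the common denominator $\mathcal{X}_1(\mathcal{X}_1^\beta + \mathcal{Y}_2^\beta)$ cancels the $\mathcal{X}_1^\beta$ terms in the numerator and leaves exactly the $\beta \mathcal{Y}_2^\beta/[\mathcal{X}_1(\mathcal{X}_1^\beta + \mathcal{Y}_2^\beta)]$ claimed in Formula~\ref{eq:pilot_pd1}. For $\mathcal{X}_2$, only $-\log(\mathcal{Y}_1^\beta + \mathcal{X}_2^\beta)$ contributes, and a single chain-rule step yields $-\beta \mathcal{X}_2^{\beta-1}/(\mathcal{Y}_1^\beta + \mathcal{X}_2^\beta)$, matching Formula~\ref{eq:pilot_pd2}.

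There is no genuinely hard step here: the argument is a routine chain-rule exercise, parallel to the standard DPO derivation already displayed in the excerpt. The only point that deserves careful phrasing is the stop-gradient status of the pilot quantities $\mathcal{Y}_1,\mathcal{Y}_2$, since $\pi_{\text{pilot}}$ is a guiding policy and is not treated as a function of $\theta$ in this derivation; once this convention is made explicit (so that $\mathcal{X}_1$ and $\mathcal{X}_2$ can be varied independently of $\mathcal{Y}_1,\mathcal{Y}_2$), both identities follow by direct simplification.
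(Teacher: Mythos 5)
Your proposal is correct and follows essentially the same route as the paper's proof: both rewrite each log-sigmoid term via $\sigma(\beta\log(a/b)) = a^\beta/(a^\beta+b^\beta)$ to express $l_{\text{pilot}}$ in terms of $\mathcal{X}_1,\mathcal{X}_2,\mathcal{Y}_1,\mathcal{Y}_2$ and then differentiate with the pilot quantities held fixed. Your only deviation is expanding the log of the quotient into a difference of logs before differentiating (rather than applying the chain rule to the quotient directly), which is a cosmetic simplification yielding the identical result.
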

\begin{proof}
We defer the detailed proof to Appendix \ref{app:proof}.
\end{proof}

From Theorem \ref{thm:pilot_grad}, we can observe that the gradients of \(\mathcal{X}_1\) and \(\mathcal{X}_2\) depend on \(\mathcal{Y}_1\) and \(\mathcal{Y}_2\), respectively. Consequently, by manipulating \(\mathcal{Y}_1\) and \(\mathcal{Y}_2\), we can control the gradient flow within the alignment method, thereby influencing the updates to the chosen and rejected rewards. We present the visualized representation of these functions in Figure \ref{app:fig:partial-x} in the Appendix.

\begin{theorem}
\label{thm:grad}
The partial derivative {
$\lvert
\frac{\partial l_{\text{pilot}}}{\partial \mathcal{X}_1}
\rvert $
} increases as {$\mathcal{Y}_2$} increases, while
the partial derivative {$
\lvert
\frac{\partial l_{\text{pilot}}}{\partial \mathcal{X}_2}
\rvert
$} decreases as {$\mathcal{Y}_1$} increases. 
\end{theorem}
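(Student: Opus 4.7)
The plan is to read both monotonicity claims off the closed-form expressions already established in Theorem~\ref{thm:pilot_grad}, treating $\mathcal{Y}_1$ and $\mathcal{Y}_2$ as the free variables and $\mathcal{X}_1,\mathcal{X}_2,\beta$ as positive constants. Since every $\mathcal{X}_i,\mathcal{Y}_i$ is a ratio of probabilities, they are strictly positive, and $\beta>0$ by convention, so I can pin down the sign of each partial and then drop the absolute values before differentiating.

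For the first assertion I would rewrite
\[
\left|\frac{\partial l_{\text{pilot}}}{\partial \mathcal{X}_1}\right|
= \frac{\beta}{\mathcal{X}_1}\cdot\frac{\mathcal{Y}_2^\beta}{\mathcal{X}_1^\beta+\mathcal{Y}_2^\beta}
= \frac{\beta}{\mathcal{X}_1}\cdot\frac{1}{1+(\mathcal{X}_1/\mathcal{Y}_2)^\beta}.
\]
With $\mathcal{X}_1$ and $\beta$ held fixed, the prefactor $\beta/\mathcal{X}_1$ is constant in $\mathcal{Y}_2$, while $(\mathcal{X}_1/\mathcal{Y}_2)^\beta$ is strictly decreasing in $\mathcal{Y}_2$ (because $\beta>0$), so the second factor is strictly increasing. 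Readers preferring an explicit computation can apply the quotient rule to obtain
\[
\frac{\partial}{\partial \mathcal{Y}_2}\left|\frac{\partial l_{\text{pilot}}}{\partial \mathcal{X}_1}\right|
= \frac{\beta^2\,\mathcal{X}_1^{\beta-1}\,\mathcal{Y}_2^{\beta-1}}{\left(\mathcal{X}_1^\beta+\mathcal{Y}_2^\beta\right)^2}>0,
\]
which settles the first claim.

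For the second assertion, the identity
\[
\left|\frac{\partial l_{\text{pilot}}}{\partial \mathcal{X}_2}\right|
= \frac{\beta\,\mathcal{X}_2^{\beta-1}}{\mathcal{Y}_1^\beta+\mathcal{X}_2^\beta}
\]
has a numerator independent of $\mathcal{Y}_1$ and a denominator $\mathcal{Y}_1^\beta+\mathcal{X}_2^\beta$ that is strictly increasing in $\mathcal{Y}_1$, so the quotient is strictly decreasing in $\mathcal{Y}_1$. A direct differentiation gives
\[
\frac{\partial}{\partial \mathcal{Y}_1}\left|\frac{\partial l_{\text{pilot}}}{\partial \mathcal{X}_2}\right|
= -\,\frac{\beta^2\,\mathcal{X}_2^{\beta-1}\,\mathcal{Y}_1^{\beta-1}}{\left(\mathcal{Y}_1^\beta+\mathcal{X}_2^\beta\right)^2}<0,
\]
closing the argument.

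There is no genuinely hard step here: the theorem is essentially a corollary of Theorem~\ref{thm:pilot_grad}, obtained by inspecting how each closed-form gradient depends on a single parameter. The only subtlety worth flagging at the top of the proof is that $\beta>0$ and the four reward ratios are strictly positive, because this is what makes the monotonicities strict and lets me manipulate the absolute values freely; with those sign conventions recorded, the two short computations above finish the proof.
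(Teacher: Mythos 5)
Your proposal is correct and matches the paper's own proof: both establish the claims by differentiating the closed-form gradients from Theorem~\ref{thm:pilot_grad} with respect to $\mathcal{Y}_2$ and $\mathcal{Y}_1$, arriving at the same expressions $\beta^2\mathcal{X}_1^{\beta-1}\mathcal{Y}_2^{\beta-1}/(\mathcal{X}_1^\beta+\mathcal{Y}_2^\beta)^2>0$ and $-\beta^2\mathcal{X}_2^{\beta-1}\mathcal{Y}_1^{\beta-1}/(\mathcal{Y}_1^\beta+\mathcal{X}_2^\beta)^2<0$. Your additional inspection argument (rewriting the first gradient as $\frac{\beta}{\mathcal{X}_1}\cdot\frac{1}{1+(\mathcal{X}_1/\mathcal{Y}_2)^\beta}$ and noting the second denominator is increasing in $\mathcal{Y}_1$) is a nice, slightly more transparent supplement, but the substance of the argument is the same.
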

\begin{proof}
Please see detailed proof in Appendix \ref{app:proof}.
\end{proof}
As preference alignment algorithms enhance the generation probability of preferred text while diminishing that of non-preferred text during fine-tuning, we have $p_2 < 1$. Consequently, \(\mathcal{Y}_2 > \mathcal{X}_2\). 
Comparing Formula \ref{eq:pilot_pd1}  with Formula \ref{eq:pd1}, we can see that the difference lies in just one variable. For instance, $\mathcal{X}_2$ in Formula \ref{eq:pd1} is replaced with $\mathcal{Y}_2$ to derive Formula \ref{eq:pilot_pd1}. 
Based on Theorem \ref{thm:grad}, we then have:
\begin{align}%
\label{eq:adjustedCompare}
\left\lvert
\frac{\partial l_{\text{pilot}}}{\partial \mathcal{X}_1}
\right\rvert  > \left\lvert
\frac{\partial l_{\text{DPO}}}{\partial \mathcal{X}_1}
\right\rvert 
\end{align}%
Formula \ref{eq:adjustedCompare} reveals that \TheName{} can enlarge the gradient of chosen rewards, which enhances the updating of chosen rewards. Consequently, \TheName{} boosts the generation of preferred responses.

\begin{theorem}
\label{thm:ratio}
Let
{ $\pi_\text{pilot} = \pi_\theta$} and
{ $z = \frac{\mathcal{Y}_1}{\mathcal{Y}_2} $},
for each pairwise preference instance
{
$(x, y_w, y_l) \in \mathcal{D}$%
}, 
the ratio between the increase in the probability of a human-preferred response and the decrease in the probability of a human-dispreferred response is given by:
\begin{equation}
  \left \lvert \frac{\partial l_{\text{pilot}}}{\partial \mathcal{X}_1} /
\frac{\partial l_{\text{pilot}}}{\partial \mathcal{X}_2}
\right \rvert = \frac{\mathcal{X}_2}{\mathcal{X}_1} \cdot f(z),
\label{eq:pilot_ratio}
\end{equation}
where
\begin{equation}
f(z) = \frac{1}{p_2^\beta}
\frac{z^\beta  + p_2^\beta}{p_1^\beta z^\beta  + 1}.
\end{equation}
$f(z)$ is a monotonic function of {$z$}.
When { $p_1 p_2 < 1$ }, the function \( f(z) \) is a monotonically increasing function of \( z \).
Conversely,  when { $p_1 p_2 > 1$ }, the function \( f(z) \) is a decreasing function of \( z \).
Furthermore, { $f(z) > 1$ } when { $p_1 p_2 < 1$ }. 
\end{theorem}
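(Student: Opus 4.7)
The plan is to use Theorem~\ref{thm:pilot_grad} to compute the absolute ratio of partial derivatives directly, rewrite it in terms of $\mathcal{Y}_1,\mathcal{Y}_2$ using the identities $\mathcal{X}_1 = p_1\mathcal{Y}_1$ and $\mathcal{X}_2 = p_2\mathcal{Y}_2$, then reduce everything to the single scale-invariant variable $z = \mathcal{Y}_1/\mathcal{Y}_2$. The remaining monotonicity claims will follow from a single derivative computation, and the sign comparison $f(z) > 1$ will reduce to the same inequality $(p_1 p_2)^\beta < 1$.

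First, from Formulas \ref{eq:pilot_pd1}--\ref{eq:pilot_pd2},
\begin{equation*}
\left\lvert\frac{\partial l_{\text{pilot}}/\partial \mathcal{X}_1}{\partial l_{\text{pilot}}/\partial \mathcal{X}_2}\right\rvert
= \frac{\mathcal{Y}_2^\beta\bigl(\mathcal{Y}_1^\beta + \mathcal{X}_2^\beta\bigr)}{\mathcal{X}_1\,\mathcal{X}_2^{\beta-1}\bigl(\mathcal{X}_1^\beta + \mathcal{Y}_2^\beta\bigr)}
= \frac{\mathcal{X}_2}{\mathcal{X}_1}\cdot \frac{\mathcal{Y}_2^\beta\bigl(\mathcal{Y}_1^\beta + \mathcal{X}_2^\beta\bigr)}{\mathcal{X}_2^\beta\bigl(\mathcal{X}_1^\beta + \mathcal{Y}_2^\beta\bigr)}.
\end{equation*}
I would then substitute $\mathcal{X}_1 = p_1\mathcal{Y}_1$, $\mathcal{X}_2 = p_2\mathcal{Y}_2$ into the second factor, cancel a common $\mathcal{Y}_2^\beta$ in numerator and denominator, and divide through by $\mathcal{Y}_2^\beta$ to introduce $z^\beta = (\mathcal{Y}_1/\mathcal{Y}_2)^\beta$. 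The result is exactly $\frac{1}{p_2^\beta}\cdot\frac{z^\beta + p_2^\beta}{p_1^\beta z^\beta + 1} = f(z)$, which yields Formula \ref{eq:pilot_ratio}.

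Next, for the monotonicity, the prefactor $1/p_2^\beta$ is constant in $z$, so I only need the sign of the derivative of $g(z) := (z^\beta + p_2^\beta)/(p_1^\beta z^\beta + 1)$. A direct quotient-rule computation gives
\begin{equation*}
g'(z) \;=\; \frac{\beta z^{\beta-1}\bigl[\,1 - (p_1 p_2)^\beta\,\bigr]}{(p_1^\beta z^\beta + 1)^2},
\end{equation*}
where the clean cancellation of the $z^{2\beta}$ and constant terms leaves only the factor $1 - (p_1 p_2)^\beta$ in the bracket. Since $z > 0$ and the denominator is positive, the sign of $g'(z)$ — and hence of $f'(z)$ — equals that of $1 - (p_1 p_2)^\beta$. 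Thus $f$ is strictly increasing when $p_1 p_2 < 1$ and strictly decreasing when $p_1 p_2 > 1$, establishing the monotonicity claims.

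Finally, for $f(z) > 1$ under $p_1 p_2 < 1$, I would clear denominators: $f(z) > 1$ is equivalent to $z^\beta + p_2^\beta > p_2^\beta(p_1^\beta z^\beta + 1) = (p_1 p_2)^\beta z^\beta + p_2^\beta$, which simplifies to $z^\beta\bigl(1 - (p_1 p_2)^\beta\bigr) > 0$, and this holds for every $z > 0$ precisely when $p_1 p_2 < 1$. The main (mild) obstacle is the algebraic bookkeeping in the first step — tracking which variables become $\mathcal{Y}$'s after the substitution so that the common $\mathcal{Y}_2^\beta$ cancels cleanly — but once past that, the derivative step and the sign step are essentially the same inequality $(p_1 p_2)^\beta \lessgtr 1$ appearing twice.
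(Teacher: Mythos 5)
Your proposal is correct and follows essentially the same route as the paper's proof: form the ratio of the two partial derivatives from Theorem~\ref{thm:pilot_grad}, factor out $\mathcal{X}_2/\mathcal{X}_1$, substitute $\mathcal{X}_1 = p_1\mathcal{Y}_1$ and $\mathcal{X}_2 = p_2\mathcal{Y}_2$ to reduce to $z$, and read the sign of $f'(z)$ off the factor $1-(p_1p_2)^\beta$. The only difference is that you also spell out the final claim $f(z)>1$ for $p_1p_2<1$ by clearing denominators, a step the paper's appendix proof leaves implicit, so your write-up is, if anything, slightly more complete.
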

\begin{proof}
Please see detailed proof in Appendix \ref{app:proof}.
\end{proof}
When the rejected reward decreases rapidly, it leads to $p_1 p_2 < 1$. Consequently, this results in \( f(z) > 1 \), which boosts the ratio value given by Equation \ref{eq:pilot_ratio}. As \( z = \frac{\mathcal{Y}_1}{\mathcal{Y}_2} \) increases, \( f(z) \) also increases throughout the training process. This behavior aligns with our goal G2, thereby enhancing the capability of LLM to generate preferred text. We present the visual representation of $f(z)$ in Figure \ref{app:fig:fzs} in the Appendix.
A comparison between Figure \ref{fig:dpo_pilot_cmp} (c) and (g) illustrates an example of how Theorem \ref{thm:ratio} works, wherein both DPO and \TheName{} are trained on the same base model using the same preference dataset.

\subsection{Sub-sequence Construction}
\label{sec:subsequence}

Based on Theorems \ref{thm:pilot_grad}, \ref{thm:grad}, and \ref{thm:ratio}, we derive sub-sequences $\hat{y}_w$ and $\hat{y}_l$ from the sequences $y_w$ and $y_l$, respectively. $\hat{y}_w$ and $\hat{y}_l$ serve as indicators to guide the refinement of updates for chosen and rejected reward adjustments. Let $l_1$ and $l_2$ denote the lengths of the sequences $y_w$ and $y_l$. We define $l_c$ as the minimum length between $l_1$ and $l_2$. From the pairs $(y_w, y_l)$, we randomly select preference data pairs $(\hat{y}_w, \hat{y}_l)$ with lengths $(r_1 \cdot l_c, r_2 \cdot l_c)$, where $r_1$ and $r_2$ are hyper-parameters. 

While generating $\hat{y}_w$ and $\hat{y}_l$ for both the \textit{pilot} model and the reference model, we can exploit the same random index or different random indices. Utilizing the same random index ensures that the sub-sequences are constructed from an identical set of tokens. Conversely, employing different random indices results in sub-sequences derived from distinct sets of tokens. We refer to the setting with the same index as \textit{Pilot}$_s$ and that with different indices as \textit{Pilot}$_d$. As different random indices may introduce an element of randomness into the learning space so as to allow \TheName{} to explore more thoroughly and avoid overfitting with superb performance, we exploit \textit{Pilot}$_d$ in \TheName{}.

Adjusting $r_1$ and $r_2$ is critical to the gradient changes associated with the chosen and rejected rewards during the preference optimization process, as indicated by Theorem \ref{thm:grad}. Smaller values of $r_2$ lead to shorter \textit{pilot} sequences, which in turn increases $\mathcal{Y}_2$. As the model converges, the likelihood of encountering tokens from $\hat{y}_l$ decreases. Hence, decreasing $r_2$ typically leads to larger magnitudes of the partial derivatives $\left| \frac{\partial l_{\text{pilot}}}{\partial \mathcal{X}_1} \right|$. Then, the chosen rewards are updated rapidly and the capability of generating human-preferred responses is improved. However, to conserve the semantic meanings of responses, we empirically set $r_1 \geq 0.6$ and $r_2 \geq 0.6$. Moreover, some randomness may exist in the subsequence construction and training process, thus, we fine-tune the values of $r_1$ and $r_2$ to achieve superb performance compared with that of $r_1 = 0.6$ and $r_2 = 0.6$ (see details in Section \ref{subsubsec:pilot}).

\begin{table*}[t]
\begin{small}
\centering
\begin{tabular}{c cccc cccc}
\toprule
\multirow{2}{*}{Methods} &
\multicolumn{4}{c}{Llama-3.1 instruct 8B} &
\multicolumn{4}{c}{Llama-3.1 base 8B}\\
\cmidrule(lr){2-9}
    & Score$_1$    & Score$_2$    & Score$_\text{avg}$  & Token$_\text{len}$
    & Score$_1$    & Score$_2$    & Score$_\text{avg}$  & Token$_\text{len}$ \\
\cmidrule(lr){2-5}
\cmidrule(lr){6-9}
SFT & 8.40 & 7.54 & 7.97 & 287 
    & 7.47 & 6.60 & 7.03 & 247 \\
DPO \citep{rafailov2024direct} & 8.27 & 7.36 & 7.82 & 329
    & 7.29 & 6.41 & 6.85 & 263 \\
NCA \citep{chen2024noise} & 8.13 & 7.21 & 7.67 & 308
    & 7.47 & 6.75 & 7.11 & 256 \\
BCO \citep{Jung2024bco} & 8.22 & 7.25 & 7.74 & 305 
    & 7.44 & 6.43 & 6.94 & 278 \\
IPO \citep{azar24aipo} & \textbf{8.57} & 7.51 & 8.04 & 383
    & 7.51 & \textbf{7.00} & 7.26 & 255 \\
SamPO \citep{Lu2024EliminatingBL} & 8.34 & 7.66 & 8.00 & 289
      & 7.70 & 6.52 & 7.11 & 262 \\
TDPO \citep{zeng2024tokenlevel} & 8.39 & 7.37 & 7.88 & 296
      & 7.30 & 6.38 & 6.84 & 268 \\
\midrule
\TheName{} & 8.38 & \textbf{7.90} & \textbf{8.14} & 312
      & \textbf{7.98} & 6.90 & \textbf{7.44} & 264 \\
\midrule
\multirow{2}{*}{} &
\multicolumn{4}{c}{Qwen-2 instruct 7B} &
\multicolumn{4}{c}{Qwen-2 base 7B}\\
\cmidrule(lr){2-9}
    & Score$_1$    & Score$_2$    & Score$_\text{avg}$  & Token$_\text{len}$
    & Score$_1$    & Score$_2$    & Score$_\text{avg}$  & Token$_\text{len}$ \\
\cmidrule(lr){2-5}
\cmidrule(lr){6-9}
SFT & 8.14 &  7.64&  7.78& 311 
    & 7.94 &  6.80&  7.37& 269    \\
DPO \citep{rafailov2024direct} & 8.44 & 7.99 & 8.21 & 307
    & 7.87 & 6.99 & 7.43 & 293    \\
NCA \citep{chen2024noise} & 8.41 & \textbf{8.12} & 8.27 & 303
    & 7.83 & 7.34 & 7.58 & 291    \\
BCO \citep{Jung2024bco} & 8.49 & 7.97 & 8.23 & 309 
    & 7.87 & 6.62 & 7.25 & 326    \\
IPO \citep{azar24aipo} & 8.31 & 8.04 & 8.17 & 312 
    & 7.65 & \textbf{7.42}  & 7.54 & 351    \\
SamPO \citep{Lu2024EliminatingBL} &8.56&  7.86& 8.21 & 307
      &8.09& 7.09 & 7.59 & 320  \\
TDPO \citep{zeng2024tokenlevel} &8.32& 7.94 & 8.13 & 313
      &7.91& 6.88 & 7.39 & 327  \\
\midrule
\TheName{} & \textbf{8.68}   & 8.04   &    \textbf{8.36}           & 318
      & \textbf{8.26}   & 7.09   &    \textbf{7.67}           & 329 \\
\bottomrule
\end{tabular}
\caption{MT-Bench Results across different model configurations. Here, Score$_1$ refers to the score from the first turn, Score$_2$ to the score from the second turn, and Score$_\text{avg}$ represents the average score. Token$_\text{len}$ indicates the average length of output tokens for each method. we set $r_1=r_2$ for \TheName{} in this experiment.}
\label{tbl:exp}
\end{small}
\end{table*}

\section{Experimental Evaluation}
\label{sec:experiments}
In this section, we compare \TheName{} with 6 state-of-the-art performance optimization algorithms, exploiting 4 model configurations and 8 tasks. We first present the experimental setup. Then, we illustrate the experimental results. Finally, we show the ablation study.

\subsection{Experimental Setup}

We compare \TheName{} against 6 baselines, including DPO \citep{rafailov2024direct},
SamPO \citep{Lu2024EliminatingBL}, IPO \citep{azar24aipo}, Token-Level DPO \citep{zeng2024tokenlevel}, NCA \citep{chen2024noise}, and BCO \citep{Jung2024bco}. These competitive baselines cover a broad range of methods, addressing issues such as eliminating verbosity, avoiding overfitting, ensuring robust alignment, and more. In addition, we consider Llama-3.1 8B \citep{llama3_1blog} and Qwen-2 7B \citep{qwen2} across two configurations: Instruct and Base, which corresponds to 4 model configurations. For the Instruct configuration, we use the instructed model as the Supervised Fine-Tuned (SFT) model, which has already undergone a Supervised Fine-Tuning phase. In contrast, for the Base configuration, we fine-tune the base model with the UltraChat-200k dataset \citep{ding2023enhancing} to create the SFT model, which enhances the base LLM capacity to follow instructions. We leverage the publicly available UltraFeedback dataset \citep{cui2023ultrafeedback} as human preference data. Each entry in the UltraFeedback dataset follows the format $(x, y_w, y_l)$, designed to reflect human values such as helpfulness and honesty.  

We exploit two open-ended generation benchmarks, i.e., MT-Bench \citep{zheng2023judging} and AlpacaEval-2 \citep{alpaca_eval,dubois2024length} (see details in Appendix). For the conditional benchmarks, we evaluate our models on the following 6 tasks: MMLU in a 5-shot setting \citep{Hendrycks2020MeasuringMM}, GSM8K in an 8-shot setting \citep{cobbe2021training}, PiQA in a 3-shot setting \citep{Bisk2020}, TruthfulQA in a 3-shot setting \citep{lin-etal-2022-truthfulqa}, IFEVAL in a 3-shot setting \citep{zhou2023instructionfollowing}, and ARC in a 3-shot setting \citep{Clark2018ThinkYH}. Please see details of the experimental setup in Appendix \ref{app:exp}.

\subsection{Evaluation of \TheName{}}

In this section, we present the experimental comparison of \TheName{} with 7 state-of-the-art optimization algorithms. We first present the experimental results on two open-ended benchmarks, i.e., MT-Bench and AlpacaEval-2. Then, we show the results on conditioned benchmarks. Finally, we present the training rewards of \TheName{} compared with DPO. 

\begin{table*}[t]
\begin{small}
\centering
\begin{tabular}{c ccc ccc c}
\toprule
Method   &   GSM8K   & MMLU     & PiQA     & TruthfuQA   & IFEval  & ARC    & Avg.\\
\midrule
SFT      &  0.5625   & 0.7060   & 0.8096   &  0.5734     &  \textbf{0.4251} & 0.8582 & 0.6558\\
DPO \citep{rafailov2024direct}     &  0.5989   & 0.7065   & \textbf{0.8112}   &  0.5774     &  0.4140 & 0.8628 & 0.6618\\
NCA \citep{chen2024noise}     &  0.5921   & 0.7057   & 0.8079   &  0.5782     &  0.4140 & 0.8607 &  0.6598\\
BCO \citep{Jung2024bco}     &  0.5898   & 0.7065   & 0.8074   &  0.5776     &  \textbf{0.4251} & 0.8620 & 0.6614 \\
IPO \citep{azar24aipo}     &  \textbf{0.6406}   & 0.7039   & 0.7894   &  \textbf{0.5876}     &  0.3974 & 0.8535 & 0.6620\\
SamPO \citep{Lu2024EliminatingBL}   &  \underline{0.6133}   & \underline{0.7067}   & 0.8074   &  \underline{0.5844}     &  0.3993 & \underline{0.8632} & \underline{0.6623} \\
TDPO \citep{zeng2024tokenlevel}     &  0.5951   & 0.7055   & 0.8089   &  0.5763     &  0.3967 & 0.8589 & 0.6569\\
\midrule
\TheName{}    &  0.6111   & \textbf{0.7069}   & \underline{0.8107}   &  0.5806     &  \underline{0.4196} & \textbf{0.8641} & \textbf{0.6655} \\
\bottomrule
\end{tabular}
\caption{
Evaluation results on conditional benchmarks for various approaches, using Qwen-2 instruct 7B as the base model.
}
\label{tbl:cond-res}
\end{small}
\end{table*}

As shown in Table \ref{tbl:exp}, \TheName{} achieves the highest average score compared to other approaches with the MT-Bench benchmark across various base models. Specifically, \TheName{} significantly outperforms DPO (from 1.83\%, to 8.61\%), which highlights the broad applicability of our proposed method across different base models and confirms its effectiveness through high average scores. Moreover, the table reveals that DPO does not invariably enhance the MT-Bench score, which is in line with previous findings \citep{liu2024lddpo}. This result can be attributed to the limitations of DPO as discussed in Section \ref{subsec:DPO}. In addition, compared to the SFT baseline, most alignment methods tend to produce longer response lengths. Notably, the response length of \TheName{} is similar to that of DPO with negligible length bias brought by the \textit{pilot} term, e.g., \TheName{} has a shorter response length on Llama-3.1 instruct 8B, while it has a longer response length on Qwen-2 instruct 7B compared to DPO. Furthermore, the experimental results confirm the capability of \TheName{} to escape saddle points. IPO and SamPO have similar performance while their response lengths differ significantly. Meanwhile, the average scores of IPO and SamPO are lower than that of \TheName{}, which indicates that SamPO and IPO may become trapped in different local optima. In contrast, \TheName{} utilizes a self-guide scheme to avoid getting trapped in a suboptimal policy.
In addition, on AlpacaEval-2 benchmark, our experimental results show that \TheName{} outperforms DPO by 2.51\% on the LC win rate metrics when evaluating the Llama-3.1 instruct 8B model (see details in Appendix).

As shown in Table \ref{tbl:cond-res}, \TheName{} achieves the highest average score (up to 0.0097) compared with 7 competitive baselines based on the conditional benchmarks. In addition, the experimental results demonstrate that all alignment algorithms improve the average score when compared to the SFT baseline. This implies that these alignment algorithms can enhance the capabilities of LLMs to a certain extent. IPO and SamPO achieve higher scores on the GSM8K benchmark, which may suggest that avoiding overfitting and eliminating length bias could improve the reasoning abilities of LLMs. From Tables \ref{tbl:exp} and \ref{tbl:cond-res}, we can also observe that the performance of different algorithms varies between open-ended and conditional benchmarks. Hence, different alignment algorithms correspond to diverse capability aspects of LLMs.

\begin{figure*}[t]
\centering
\begin{tabular}{cc cc cc cc}
  \multicolumn{2}{c}{
  \includegraphics[width=0.22\linewidth]{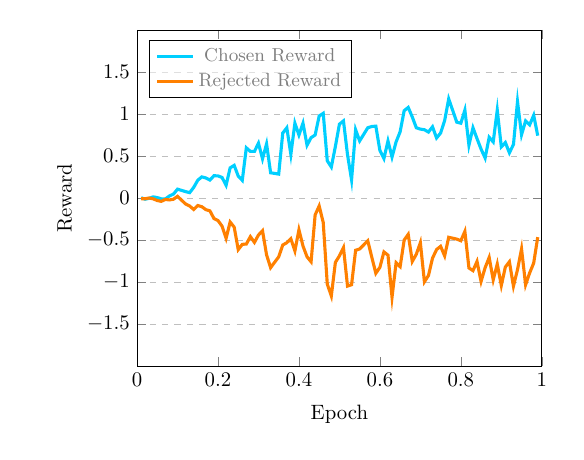}} &
  \multicolumn{2}{c}{
  \includegraphics[width=0.22\linewidth]{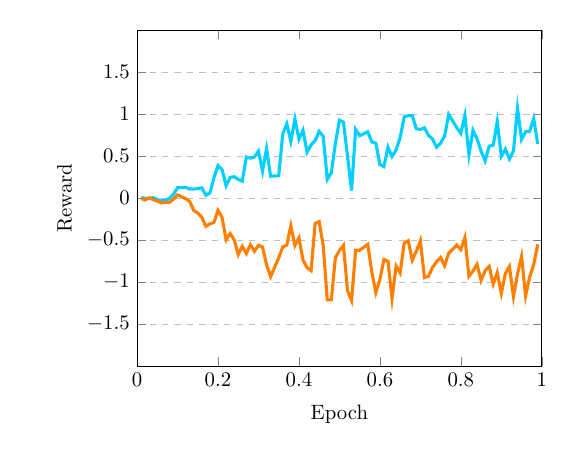}} &
  \multicolumn{2}{c}{
  \includegraphics[width=0.22\linewidth]{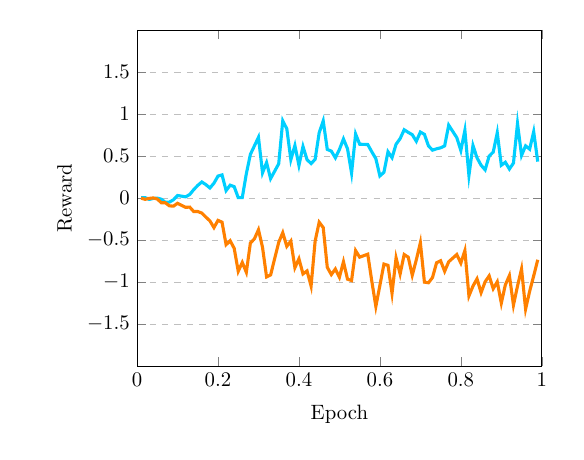}} &
  \multicolumn{2}{c}{
  \includegraphics[width=0.22\linewidth]{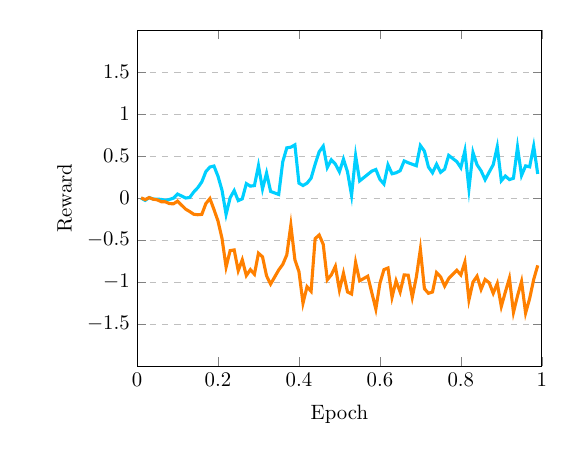}} \\[-5pt]
  \multirow{2}{*}{{\small \hspace{1.8em} (a)}} &\hspace{-2em} {\small CCR = 0.7542} &
  \multirow{2}{*}{{\small \hspace{1.8em} (b)}} &\hspace{-2em} {\small CCR = 0.6816} &
  \multirow{2}{*}{{\small \hspace{1.8em} (c)}} &\hspace{-2em} {\small CCR = 0.5267} &
  \multirow{2}{*}{{\small \hspace{1.8em} (d)}} &\hspace{-2em} {\small CCR = 0.3569} \\
  
                                               &\hspace{-1.8em} {\small CRR = -0.8325} &
                                               &\hspace{-1.8em} {\small CRR = -0.9011} &
                                               &\hspace{-1.8em} {\small CRR = -1.0337}&
                                               &\hspace{-1.8em} {\small CRR = -1.0842} \\ [-5pt]
  
  \end{tabular}
\caption{
Training reward curves for the Llama-3.1 base 8B model using the \TheName{} method:
(a) $r_1 = 0.6$ and $r_2 = 0.6$.
(b) $r_1 = 0.7$ and $r_2 = 0.7$.
(c) $r_1 = 0.8$ and $r_2 = 0.8$.
(d) $r_1 = 0.9$ and $r_2 = 0.9$.
``CCR'' represents the average (last 80 iterations) convergence chosen reward and ``CRR'' represents the average (last 80 iterations) convergence reject reward.
}
\label{-6mm}
\label{fig:curve_vary_r}
\end{figure*}

In order to show the robust performance of \TheName{}, we present the training awards with diverse model configurations in Figure \ref{fig:dpo_pilot_cmp}. In this experimentation, the training reward curve for \TheName{} was generated using hyper-parameters $r_1$ and $r_2$, both set to 0.6. The figure demonstrates that \TheName{} is much more stable than DPO across all the base model configurations. We empirically observe that the patterns of the training rewards for DPO vary significantly across different base models. For instance, the chosen reward of DPO on Llama-3.1 instruct 8B first increases, then drops to a low value, while the chosen reward of DPO on Qwen-2 instruct 7B shows the well-observed decreasing-likelihood phenomenon. In contrast, \TheName{} exhibits consistent reward patterns. These findings reveal that \TheName{} offers greater resilience compared to the DPO method.

\begin{figure}[t]
\includegraphics[width=\linewidth]{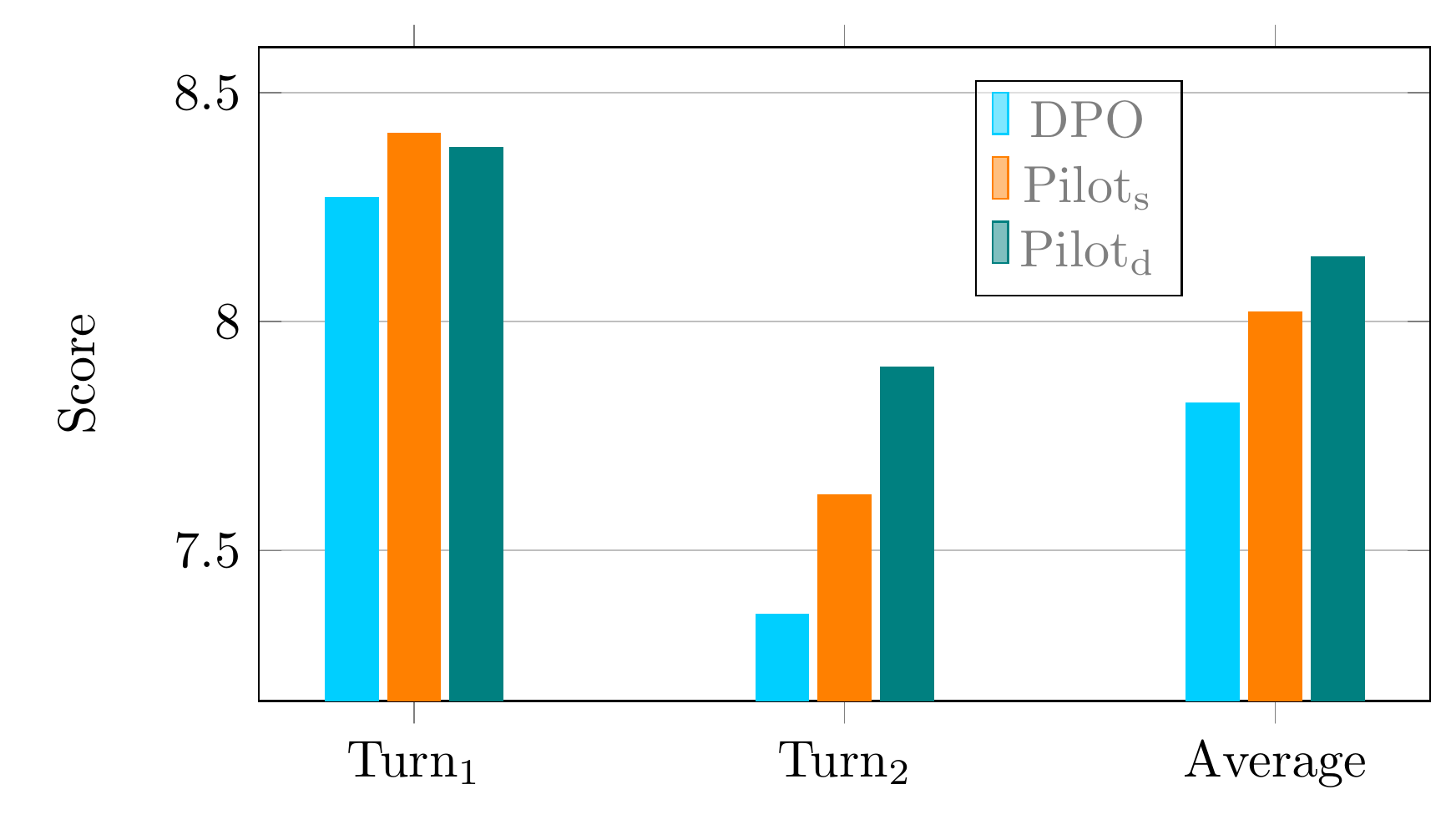}
\caption{MT-Bench Results across different model configurations, using Llama-3.1 instruct 8B as the base model.}
\label{fig:seq_cmp}
\end{figure}

\subsection{Ablation Study}

In this section, we first present experimental results for selecting between \textit{Pilot}$_s$ and \textit{Pilot}$_d$. Next, we analyze the impact of the hyper-parameters $r_1$ and $r_2$ on \textit{Pilot} and overall model performance. We further investigate the behavior of \TheName{} under settings where $r_1 \neq r_2$. Finally, we compare \TheName{} with ORPO, a recent method for preference optimization.

\subsubsection{Sub-sequence Construction}

As shown in Figure \ref{fig:seq_cmp}, we carry out an experiment for the comparison between \textit{Pilot}$_s$ and \textit{Pilot}$_d$ with Llama-3.1 instruct 8B and MT-Bench. The experimental results demonstrate that both \textit{Pilot}$_s$ and \textit{Pilot}$_d$ achieve higher average scores (from 2.56\% to 4.09\%) compared to DPO. This indicates the effectiveness of \TheName{}. Moreover, \textit{Pilot}$_d$ attains a higher average score (1.50\%) than \textit{Pilot}$_s$. This is expected as different random indices introduce an element of randomness into the learning space corresponding to superior performance as explained in Section \ref{sec:subsequence}.

\subsubsection{The \textit{Pilot} Term}
\label{subsubsec:pilot}

While $r_1$ and $r_2$ are critical to \textit{pilot}, we conduct experimentation to evaluate the influence of $r_1$ and $r_2$ on the performance of \TheName{}, including the robustness and the reward patterns.

\textbf{Robustness.} While \textit{pilot} exploits $r_1$ and $r_2$ to regulate the lengths of the token sequences, we carry out an experimentation with diverse $r_1$ and $r_2$ ($r_1 = r_2 = r$) so as to verify the corresponding performance and robustness of \TheName{}. As shown in Figure \ref{fig:score_r}, \TheName{} significantly outperforms DPO in all settings, achieving notably higher scores in the first turn (from 4.25\% to 9.19\%), second turn (from 1.56\% to 10.14\%), and on average (from 3.06\% to 8.61\%). \TheName{} achieves its lowest value at \( r=1 \), yet it still exhibits a relative improvement of 3.06\% over DPO. The length of the generated response tokens remains comparable to that of DPO. These experimental results show the robustness of \TheName{} across varying $r_1$ and $r_2$.

\begin{figure}[t]
\includegraphics[width=\linewidth]{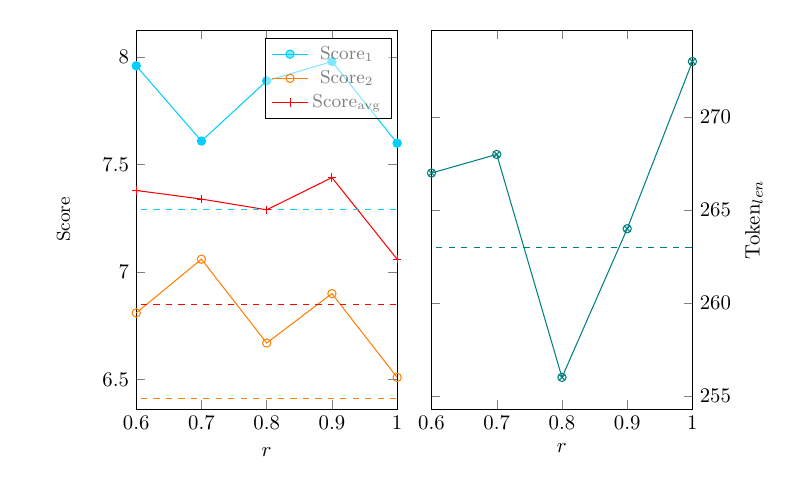}
\caption{
MT-Bench results of \TheName{} across various configurations, using Llama-3.1 base 8B as the base model. The dashed lines represent the score and the token length of DPO.
}
\label{fig:score_r}
\end{figure}

\textbf{Reward Patterns.} As shown in Figure \ref{fig:curve_vary_r}, when $r_1$ and $r_2$ range from 0.6 to 0.9, both the average Convergence Chosen Reward (CCR) and the average Convergence Reject Reward (CRR) over the last 80 iterations of fine-tuning decrease. A more significant CRR corresponds to a modest decrease in rejected rewards, which is in line with G1 explained in Section \ref{sec:g1g2}. Correspondingly, as shown in Figure \ref{fig:score_r}, the average score is negatively correlated with $r$ (from 0.6 to 1.0), with the exception of a fluctuation occurring at $r = 0.9$. This fluctuation may be due to the randomness in sub-sequence construction. As a consequence, in our experimentation, we take the best values of $r$ to achieve excellent performance (see experimental setting details of $r$ in Appendix).

\begin{table}[t]
\begin{small}
\begin{center}
\begin{tabular}{cc ccc}
\toprule
  $r_1$ & $r_2$ & Score$_1$ & Score$_2$ & Score$_\text{avg}$ \\
\midrule
0.9 & 0.5  & 7.72 &6.50  &7.11 \\
0.9 & 0.6  & \textbf{8.01} & \textbf{6.95} & \textbf{7.48} \\
0.9 & 0.7  & 7.84 & 6.89 & 7.37 \\
0.9 & 0.8  & 7.84 & 6.68 & 7.26 \\
0.9 & 0.9  & 7.98 & 6.90 & 7.44 \\
\bottomrule
\end{tabular}
\end{center}
\end{small}
\caption{MT-Bench Results across different $r_1$ and $r_2$ on Llama-3.1 base 8B model.}
\label{tbl:diff-r1-r2}
\end{table}

\textbf{Different $r_1$ and $r_2$.} In previous experiments, we set $r_1 = r_2$ to evaluate model performance. To further investigate the effectiveness of \TheName{}, we conduct additional experiments with different values of $r_1$ and $r_2$. As shown in Table~\ref{tbl:diff-r1-r2}, varying these parameters leads to further improvements in performance. Specifically, setting $r_1 = 0.9$ and $r_2 = 0.6$ achieves an average score of 7.48, outperforming the baseline configuration ($r_1 = r_2 = 0.9$, score = 7.44). This result also represents a significant improvement over DPO, with a relative gain of 9.19\%.

\begin{table}[t]
\begin{small}
\begin{center}
\begin{tabular}{cc cc}
\toprule
  Method & Score$_1$ & Score$_2$ & Score$_\text{avg}$ \\
  \midrule
 DPO         &   7.29           &    \underline{6.41}           & \underline{6.85}        \\ 
 ORPO	     &   \underline{7.40}           &    6.04           &  6.78       \\
 \TheName{}  &   \textbf{8.01}  &    \textbf{6.95}  &  \textbf{7.48} \\
\bottomrule
\end{tabular}
\end{center}
\end{small}
\caption{MT-Bench Results across different methods on Llama-3.1 base 8B model.}
\label{tbl:cmp-orpo}
\end{table}

\textbf{Compared with ORPO.}
ORPO \citep{hong-etal-2024-orpo} presents a novel approach to preference optimization by proposing a unified odds ratio-based framework that does not rely on a separate reference model. This innovative method effectively integrates preference learning into a single training stage, thereby removing the need for an additional alignment step and significantly streamlining the overall optimization process.

We conduct an ablation study to compare \TheName{}, DPO, and ORPO. As shown in Table~\ref{tbl:cmp-orpo}, \TheName{} outperforms both DPO and ORPO by a large margin, which demonstrates the effectiveness of \TheName{}.

\section{Conclusions}
In this paper, we present a novel self-guided direct preference optimization algorithm, i.e., \TheName{}, for aligning LLMs with human preferences. \TheName{} incorporates a \textit{pilot} term in the objective function in order to guide the gradient updates of the rewards during training. We provide a detailed theoretical explanation of \TheName{}. Furthermore, extensive experimental results across various model settings and benchmarks demonstrate the significant advantages (up to 9.19\% higher score) of \TheName{}.

\cleardoublepage
\section*{Limitations}

\TheName{} includes the resampling of a sub-sequence from the logits of the output layer, which introduces extra computational steps. Nevertheless, as demonstrated in Table \ref{app:tbl:time-cost} within the Appendix, this results in a minor increase (up to 0.4\%) in computational overhead. 

While \TheName{} exploits public centralized preference datasets to fine-tune models in order to align LLMs with human values, the datasets may contain unhelpful or misleading preferred information leading to unexpected responses. \TheName{} may be subject to this potential drawback. In addition, the datasets may be distributed in diverse data centers or edge devices \cite{chen2025trustworthy,liu2024enhancing,liu2022distributed,liu2015survey}, which may restrict the application of \TheName{}. In the future, we plan to investigate the adaptation of \TheName{} into a broader setting, e.g., federated learning \cite{Liu2024Fisher,liu2024efficient,jia2024efficient,liu2024aedfl,liu2024fedasmu,liu2024fedasmu,Che2023Federated,liu2023distributed,liu2022multi,Zhang2022FedDUAP,zhou2022efficient} and distributed machine learning \cite{liu2023heterps}.

\bibliography{custom}

\begin{thebibliography}{65}
\providecommand{\natexlab}[1]{#1}

\bibitem[{AI@Meta(2024)}]{llama3_1blog}
AI@Meta. 2024.
\newblock \href {https://ai.meta.com/blog/meta-llama-3-1/} {Introducing llama 3.1: Our most capable models to date}.

\bibitem[{Anthropic(2024)}]{Claude_3_5_Sonnet_blog}
Anthropic. 2024.
\newblock \href {https://www.anthropic.com/news/claude-3-5-sonnet} {Claude 3.5 sonnet}.

\bibitem[{Azar et~al.(2023)Azar, Rowland, Piot, Guo, Calandriello, Valko, and Munos}]{azar2023general}
Mohammad~Gheshlaghi Azar, Mark Rowland, Bilal Piot, Daniel Guo, Daniele Calandriello, Michal Valko, and Remi Munos. 2023.
\newblock A general theoretical paradigm to understand learning from human preferences.
\newblock \emph{arXiv preprint arXiv:2310.12036}.

\bibitem[{Bai et~al.(2022)Bai, Jones, Ndousse, Askell, Chen, DasSarma, Drain, Fort, Ganguli, Henighan, Joseph, Kadavath, Kernion, Conerly, Showk, Elhage, Hatfield{-}Dodds, Hernandez, Hume, Johnston, Kravec, Lovitt, Nanda, Olsson, Amodei, Brown, Clark, McCandlish, Olah, Mann, and Kaplan}]{bai@2022@HF}
Yuntao Bai, Andy Jones, Kamal Ndousse, Amanda Askell, Anna Chen, Nova DasSarma, Dawn Drain, Stanislav Fort, Deep Ganguli, Tom Henighan, Nicholas Joseph, Saurav Kadavath, Jackson Kernion, Tom Conerly, Sheer~El Showk, Nelson Elhage, Zac Hatfield{-}Dodds, Danny Hernandez, Tristan Hume, Scott Johnston, Shauna Kravec, Liane Lovitt, Neel Nanda, Catherine Olsson, Dario Amodei, Tom~B. Brown, Jack Clark, Sam McCandlish, Chris Olah, Benjamin Mann, and Jared Kaplan. 2022.
\newblock Training a helpful and harmless assistant with reinforcement learning from human feedback.
\newblock \emph{arXiv preprint arXiv:2204.05862}.

\bibitem[{Bi et~al.(2024)Bi, Chen, Chen, Chen, Dai, Deng, Ding, Dong, Du, Fu, Gao, Gao, Gao, Ge, Guan, Guo, Guo, Hao, Hao, He, Hu, Huang, Li, Li, Li, Li, Li, Liang, Lin, Liu, Liu, Liu, Liu, Liu, Liu, Lu, Lu, Luo, Ma, Nie, Pei, Piao, Qiu, Qu, Ren, Ren, Ruan, Sha, Shao, Song, Su, Sun, Sun, Tang, Wang, Wang, Wang, Wang, Wang, Wu, Wu, Xie, Xie, Xie, Xiong, Xu, Xu, Xu, Yang, You, Yu, Yu, Zhang, Zhang, Zhang, Zhang, Zhang, Zhang, Zhang, Zhang, Zhao, Zhao, Zhou, Zhou, Zhu, and Zou}]{Bi@deepseek@2024}
Xiao Bi, Deli Chen, Guanting Chen, Shanhuang Chen, Damai Dai, Chengqi Deng, Honghui Ding, Kai Dong, Qiushi Du, Zhe Fu, Huazuo Gao, Kaige Gao, Wenjun Gao, Ruiqi Ge, Kang Guan, Daya Guo, Jianzhong Guo, Guangbo Hao, Zhewen Hao, Ying He, Wenjie Hu, Panpan Huang, Erhang Li, Guowei Li, Jiashi Li, Yao Li, Y.~K. Li, Wenfeng Liang, Fangyun Lin, Alex~X. Liu, Bo~Liu, Wen Liu, Xiaodong Liu, Xin Liu, Yiyuan Liu, Haoyu Lu, Shanghao Lu, Fuli Luo, Shirong Ma, Xiaotao Nie, Tian Pei, Yishi Piao, Junjie Qiu, Hui Qu, Tongzheng Ren, Zehui Ren, Chong Ruan, Zhangli Sha, Zhihong Shao, Junxiao Song, Xuecheng Su, Jingxiang Sun, Yaofeng Sun, Minghui Tang, Bingxuan Wang, Peiyi Wang, Shiyu Wang, Yaohui Wang, Yongji Wang, Tong Wu, Y.~Wu, Xin Xie, Zhenda Xie, Ziwei Xie, Yiliang Xiong, Hanwei Xu, R.~X. Xu, Yanhong Xu, Dejian Yang, Yuxiang You, Shuiping Yu, Xingkai Yu, B.~Zhang, Haowei Zhang, Lecong Zhang, Liyue Zhang, Mingchuan Zhang, Minghua Zhang, Wentao Zhang, Yichao Zhang, Chenggang Zhao, Yao Zhao, Shangyan Zhou, Shunfeng Zhou, Qihao
  Zhu, and Yuheng Zou. 2024.
\newblock Deepseek {LLM:} scaling open-source language models with longtermism.
\newblock \emph{arXiv preprint arXiv:2401.02954}.

\bibitem[{Bisk et~al.(2020)Bisk, Zellers, Bras, Gao, and Choi}]{Bisk2020}
Yonatan Bisk, Rowan Zellers, Ronan~Le Bras, Jianfeng Gao, and Yejin Choi. 2020.
\newblock Piqa: Reasoning about physical commonsense in natural language.
\newblock In \emph{AAAI}.

\bibitem[{Bradley and Terry(1952{\natexlab{a}})}]{Bradley1952RankAO}
Ralph~Allan Bradley and Milton~E. Terry. 1952{\natexlab{a}}.
\newblock Rank analysis of incomplete block designs: I. the method of paired comparisons.
\newblock \emph{Biometrika}.

\bibitem[{Bradley and Terry(1952{\natexlab{b}})}]{bradley1952rank}
Ralph~Allan Bradley and Milton~E Terry. 1952{\natexlab{b}}.
\newblock Rank analysis of incomplete block designs: I. the method of paired comparisons.
\newblock \emph{Biometrika}.

\bibitem[{Che et~al.(2023)Che, Liu, Zhou, Ren, Zhou, Sheng, Dai, and Dou}]{Che2023Federated}
Tianshi Che, Ji~Liu, Yang Zhou, Jiaxiang Ren, Jiwen Zhou, Victor~S. Sheng, Huaiyu Dai, and Dejing Dou. 2023.
\newblock Federated learning of large language models with parameter-efficient prompt tuning and adaptive optimization.
\newblock In \emph{Conf. on Empirical Methods in Natural Language Processing ({EMNLP})}, pages 7871--7888.

\bibitem[{Chen et~al.(2025)Chen, Liu, Tan, Li, Wang, Li, Sakurai, and Dou}]{chen2025trustworthy}
Chunlu Chen, Ji~Liu, Haowen Tan, Xingjian Li, Kevin I-Kai Wang, Peng Li, Kouichi Sakurai, and Dejing Dou. 2025.
\newblock Trustworthy federated learning: privacy, security, and beyond.
\newblock \emph{Knowledge and Information Systems}, 67(3):2321--2356.

\bibitem[{Chen et~al.(2024)Chen, He, Yuan, Cui, Su, and Zhu}]{chen2024noise}
Huayu Chen, Guande He, Lifan Yuan, Ganqu Cui, Hang Su, and Jun Zhu. 2024.
\newblock Noise contrastive alignment of language models with explicit rewards.
\newblock \emph{arXiv preprint arXiv:2402.05369}.

\bibitem[{Clark et~al.(2018)Clark, Cowhey, Etzioni, Khot, Sabharwal, Schoenick, and Tafjord}]{Clark2018ThinkYH}
Peter Clark, Isaac Cowhey, Oren Etzioni, Tushar Khot, Ashish Sabharwal, Carissa Schoenick, and Oyvind Tafjord. 2018.
\newblock Think you have solved question answering? try arc, the ai2 reasoning challenge.
\newblock \emph{arXiv preprint arXiv:1803.05457}.

\bibitem[{Cobbe et~al.(2021)Cobbe, Kosaraju, Bavarian, Hilton, Nakano, Hesse, and Schulman}]{cobbe2021training}
Karl Cobbe, Vineet Kosaraju, Mohammad Bavarian, Jacob Hilton, Reiichiro Nakano, Christopher Hesse, and John Schulman. 2021.
\newblock Training verifiers to solve math word problems.
\newblock \emph{arXiv preprint arXiv:2110.14168}.

\bibitem[{Cui et~al.(2023)Cui, Yuan, Ding, Yao, Zhu, Ni, Xie, Liu, and Sun}]{cui2023ultrafeedback}
Ganqu Cui, Lifan Yuan, Ning Ding, Guanming Yao, Wei Zhu, Yuan Ni, Guotong Xie, Zhiyuan Liu, and Maosong Sun. 2023.
\newblock Ultrafeedback: Boosting language models with high-quality feedback.
\newblock \emph{arXiv preprint arXiv:2310.01377}.

\bibitem[{DeepSeek-AI et~al.(2025)DeepSeek-AI, Guo, Yang, Zhang, Song, Zhang, Xu, Zhu, Ma, Wang, Bi, Zhang, Yu, Wu, Wu, Gou, Shao, Li, Gao, Liu, Xue, Wang, Wu, Feng, Lu, Zhao, Deng, Zhang, Ruan, Dai, Chen, Ji, Li, Lin, Dai, Luo, Hao, Chen, Li, Zhang, Bao, Xu, Wang, Ding, Xin, Gao, Qu, Li, Guo, Li, Wang, Chen, Yuan, Qiu, Li, Cai, Ni, Liang, Chen, Dong, Hu, Gao, Guan, Huang, Yu, Wang, Zhang, Zhao, Wang, Zhang, Xu, Xia, Zhang, Zhang, Tang, Li, Wang, Li, Tian, Huang, Zhang, Wang, Chen, Du, Ge, Zhang, Pan, Wang, Chen, Jin, Chen, Lu, Zhou, Chen, Ye, Wang, Yu, Zhou, Pan, Li, Zhou, Wu, Ye, Yun, Pei, Sun, Wang, Zeng, Zhao, Liu, Liang, Gao, Yu, Zhang, Xiao, An, Liu, Wang, Chen, Nie, Cheng, Liu, Xie, Liu, Yang, Li, Su, Lin, Li, Jin, Shen, Chen, Sun, Wang, Song, Zhou, Wang, Shan, Li, Wang, Wei, Zhang, Xu, Li, Zhao, Sun, Wang, Yu, Zhang, Shi, Xiong, He, Piao, Wang, Tan, Ma, Liu, Guo, Ou, Wang, Gong, Zou, He, Xiong, Luo, You, Liu, Zhou, Zhu, Xu, Huang, Li, Zheng, Zhu, Ma, Tang, Zha, Yan, Ren, Ren, Sha, Fu, Xu, Xie, Zhang,
  Hao, Ma, Yan, Wu, Gu, Zhu, Liu, Li, Xie, Song, Pan, Huang, Xu, Zhang, and Zhang}]{deepseekai2025deepseekr1incentivizingreasoningcapability}
DeepSeek-AI, Daya Guo, Dejian Yang, Haowei Zhang, Junxiao Song, Ruoyu Zhang, Runxin Xu, Qihao Zhu, Shirong Ma, Peiyi Wang, Xiao Bi, Xiaokang Zhang, Xingkai Yu, Yu~Wu, Z.~F. Wu, Zhibin Gou, Zhihong Shao, Zhuoshu Li, Ziyi Gao, Aixin Liu, Bing Xue, Bingxuan Wang, Bochao Wu, Bei Feng, Chengda Lu, Chenggang Zhao, Chengqi Deng, Chenyu Zhang, Chong Ruan, Damai Dai, Deli Chen, Dongjie Ji, Erhang Li, Fangyun Lin, Fucong Dai, Fuli Luo, Guangbo Hao, Guanting Chen, Guowei Li, H.~Zhang, Han Bao, Hanwei Xu, Haocheng Wang, Honghui Ding, Huajian Xin, Huazuo Gao, Hui Qu, Hui Li, Jianzhong Guo, Jiashi Li, Jiawei Wang, Jingchang Chen, Jingyang Yuan, Junjie Qiu, Junlong Li, J.~L. Cai, Jiaqi Ni, Jian Liang, Jin Chen, Kai Dong, Kai Hu, Kaige Gao, Kang Guan, Kexin Huang, Kuai Yu, Lean Wang, Lecong Zhang, Liang Zhao, Litong Wang, Liyue Zhang, Lei Xu, Leyi Xia, Mingchuan Zhang, Minghua Zhang, Minghui Tang, Meng Li, Miaojun Wang, Mingming Li, Ning Tian, Panpan Huang, Peng Zhang, Qiancheng Wang, Qinyu Chen, Qiushi Du, Ruiqi Ge, Ruisong
  Zhang, Ruizhe Pan, Runji Wang, R.~J. Chen, R.~L. Jin, Ruyi Chen, Shanghao Lu, Shangyan Zhou, Shanhuang Chen, Shengfeng Ye, Shiyu Wang, Shuiping Yu, Shunfeng Zhou, Shuting Pan, S.~S. Li, Shuang Zhou, Shaoqing Wu, Shengfeng Ye, Tao Yun, Tian Pei, Tianyu Sun, T.~Wang, Wangding Zeng, Wanjia Zhao, Wen Liu, Wenfeng Liang, Wenjun Gao, Wenqin Yu, Wentao Zhang, W.~L. Xiao, Wei An, Xiaodong Liu, Xiaohan Wang, Xiaokang Chen, Xiaotao Nie, Xin Cheng, Xin Liu, Xin Xie, Xingchao Liu, Xinyu Yang, Xinyuan Li, Xuecheng Su, Xuheng Lin, X.~Q. Li, Xiangyue Jin, Xiaojin Shen, Xiaosha Chen, Xiaowen Sun, Xiaoxiang Wang, Xinnan Song, Xinyi Zhou, Xianzu Wang, Xinxia Shan, Y.~K. Li, Y.~Q. Wang, Y.~X. Wei, Yang Zhang, Yanhong Xu, Yao Li, Yao Zhao, Yaofeng Sun, Yaohui Wang, Yi~Yu, Yichao Zhang, Yifan Shi, Yiliang Xiong, Ying He, Yishi Piao, Yisong Wang, Yixuan Tan, Yiyang Ma, Yiyuan Liu, Yongqiang Guo, Yuan Ou, Yuduan Wang, Yue Gong, Yuheng Zou, Yujia He, Yunfan Xiong, Yuxiang Luo, Yuxiang You, Yuxuan Liu, Yuyang Zhou, Y.~X. Zhu,
  Yanhong Xu, Yanping Huang, Yaohui Li, Yi~Zheng, Yuchen Zhu, Yunxian Ma, Ying Tang, Yukun Zha, Yuting Yan, Z.~Z. Ren, Zehui Ren, Zhangli Sha, Zhe Fu, Zhean Xu, Zhenda Xie, Zhengyan Zhang, Zhewen Hao, Zhicheng Ma, Zhigang Yan, Zhiyu Wu, Zihui Gu, Zijia Zhu, Zijun Liu, Zilin Li, Ziwei Xie, Ziyang Song, Zizheng Pan, Zhen Huang, Zhipeng Xu, Zhongyu Zhang, and Zhen Zhang. 2025.
\newblock Deepseek-r1: Incentivizing reasoning capability in llms via reinforcement learning.
\newblock \emph{arXiv preprint arXiv:2501.12948}.

\bibitem[{Ding et~al.(2023)Ding, Chen, Xu, Qin, Zheng, Hu, Liu, Sun, and Zhou}]{ding2023enhancing}
Ning Ding, Yulin Chen, Bokai Xu, Yujia Qin, Zhi Zheng, Shengding Hu, Zhiyuan Liu, Maosong Sun, and Bowen Zhou. 2023.
\newblock Enhancing chat language models by scaling high-quality instructional conversations.
\newblock \emph{arXiv preprint arXiv:2305.14233}.

\bibitem[{Dubois et~al.(2024)Dubois, Galambosi, Liang, and Hashimoto}]{dubois2024length}
Yann Dubois, Bal{\'a}zs Galambosi, Percy Liang, and Tatsunori~B Hashimoto. 2024.
\newblock Length-controlled alpacaeval: A simple way to debias automatic evaluators.
\newblock \emph{arXiv preprint arXiv:2404.04475}.

\bibitem[{Dubois et~al.(2023)Dubois, Li, Taori, Zhang, Gulrajani, Ba, Guestrin, Liang, and Hashimoto}]{dubois2023alpacafarm}
Yann Dubois, Xuechen Li, Rohan Taori, Tianyi Zhang, Ishaan Gulrajani, Jimmy Ba, Carlos Guestrin, Percy Liang, and Tatsunori~B. Hashimoto. 2023.
\newblock Alpacafarm: A simulation framework for methods that learn from human feedback.
\newblock \emph{arXiv preprint arXiv:2305.14387}.

\bibitem[{Engstrom et~al.(2020)Engstrom, Ilyas, Santurkar, Tsipras, Janoos, Rudolph, and Madry}]{Engstrom2020Implementation}
Logan Engstrom, Andrew Ilyas, Shibani Santurkar, Dimitris Tsipras, Firdaus Janoos, Larry Rudolph, and Aleksander Madry. 2020.
\newblock Implementation matters in deep rl: A case study on ppo and trpo.
\newblock In \emph{ICLR}.

\bibitem[{Ethayarajh et~al.(2024)Ethayarajh, Xu, Muennighoff, Jurafsky, and Kiela}]{Ethayarajh2024KTOMA}
Kawin Ethayarajh, Winnie Xu, Niklas Muennighoff, Dan Jurafsky, and Douwe Kiela. 2024.
\newblock Kto: Model alignment as prospect theoretic optimization.
\newblock \emph{arXiv preprint arXiv:2402.01306}.

\bibitem[{Feng et~al.(2024)Feng, Qin, Huang, Zhang, and Lei}]{feng2024@dpo_grad_flow}
Duanyu Feng, Bowen Qin, Chen Huang, Zheng Zhang, and Wenqiang Lei. 2024.
\newblock Towards analyzing and understanding the limitations of dpo: A theoretical perspective.
\newblock \emph{arXiv preprint arXiv:2404.04626}.

\bibitem[{Gheshlaghi~Azar et~al.(2024)Gheshlaghi~Azar, Daniel~Guo, Piot, Munos, Rowland, Valko, and Calandriello}]{azar24aipo}
Mohammad Gheshlaghi~Azar, Zhaohan Daniel~Guo, Bilal Piot, Remi Munos, Mark Rowland, Michal Valko, and Daniele Calandriello. 2024.
\newblock A general theoretical paradigm to understand learning from human preferences.
\newblock In \emph{AISTATS}.

\bibitem[{Google(2024)}]{Gemini_1_5_blog}
Google. 2024.
\newblock \href {https://blog.google/technology/ai/google-gemini-next-generation-model-february-2024} {Our next-generation model: Gemini 1.5}.

\bibitem[{Hendrycks et~al.(2021)Hendrycks, Burns, Basart, Zou, Mazeika, Song, and Steinhardt}]{Hendrycks2020MeasuringMM}
Dan Hendrycks, Collin Burns, Steven Basart, Andy Zou, Mantas Mazeika, Dawn~Xiaodong Song, and Jacob Steinhardt. 2021.
\newblock Measuring massive multitask language understanding.
\newblock \emph{arXiv preprint arXiv:2009.03300}.

\bibitem[{Hong et~al.(2024)Hong, Lee, and Thorne}]{hong-etal-2024-orpo}
Jiwoo Hong, Noah Lee, and James Thorne. 2024.
\newblock {ORPO}: Monolithic preference optimization without reference model.
\newblock In \emph{EMNLP}.

\bibitem[{Hou et~al.(2024)Hou, Niu, Du, Zhang, Liu, Zeng, Zheng, Huang, Wang, Tang, and Dong}]{Hou2024ChatGLMRLHFPO}
Zhenyu Hou, Yiin Niu, Zhengxiao Du, Xiaohan Zhang, Xiao Liu, Aohan Zeng, Qinkai Zheng, Minlie Huang, Hongning Wang, Jie Tang, and Yuxiao Dong. 2024.
\newblock Chatglm-rlhf: Practices of aligning large language models with human feedback.
\newblock \emph{arXiv preprint arXiv:2404.00934}.

\bibitem[{Ilyas et~al.(2020)Ilyas, Engstrom, Santurkar, Tsipras, Janoos, Rudolph, and Madry}]{Ilyas2020A}
Andrew Ilyas, Logan Engstrom, Shibani Santurkar, Dimitris Tsipras, Firdaus Janoos, Larry Rudolph, and Aleksander Madry. 2020.
\newblock A closer look at deep policy gradients.
\newblock In \emph{ICLR}.

\bibitem[{Jia et~al.(2024)Jia, Liu, Zhou, Tian, Dong, and Dou}]{jia2024efficient}
Juncheng Jia, Ji~Liu, Chendi Zhou, Hao Tian, Mianxiong Dong, and Dejing Dou. 2024.
\newblock Efficient asynchronous federated learning with sparsification and quantization.
\newblock \emph{Concurrency and Computation: Practice and Experience}, 36(9):e8002.

\bibitem[{Jung et~al.(2024{\natexlab{a}})Jung, Han, Nam, and On}]{jung2024BC}
Seungjae Jung, Gunsoo Han, Daniel~Wontae Nam, and Kyoung-Woon On. 2024{\natexlab{a}}.
\newblock Binary classifier optimization for large language model alignment.
\newblock \emph{arXiv preprint arXiv:2404.04656}.

\bibitem[{Jung et~al.(2024{\natexlab{b}})Jung, Han, Nam, and On}]{Jung2024bco}
Seungjae Jung, Gunsoo Han, Daniel~Wontae Nam, and Kyoung-Woon On. 2024{\natexlab{b}}.
\newblock Binary classifier optimization for large language model alignment.
\newblock \emph{arXiv preprint arXiv:2404.04656}.

\bibitem[{Li et~al.(2023)Li, Zhang, Dubois, Taori, Gulrajani, Guestrin, Liang, and Hashimoto}]{alpaca_eval}
Xuechen Li, Tianyi Zhang, Yann Dubois, Rohan Taori, Ishaan Gulrajani, Carlos Guestrin, Percy Liang, and Tatsunori~B. Hashimoto. 2023.
\newblock Alpacaeval: An automatic evaluator of instruction-following models.
\newblock \url{https://github.com/tatsu-lab/alpaca_eval}.

\bibitem[{Lin et~al.(2022)Lin, Hilton, and Evans}]{lin-etal-2022-truthfulqa}
Stephanie Lin, Jacob Hilton, and Owain Evans. 2022.
\newblock {T}ruthful{QA}: Measuring how models mimic human falsehoods.
\newblock In \emph{ACL}.

\bibitem[{Liu et~al.(2024{\natexlab{a}})Liu, Che, Zhou, Jin, Dai, Dou, and Valduriez}]{liu2024aedfl}
Ji~Liu, Tianshi Che, Yang Zhou, Ruoming Jin, Huaiyu Dai, Dejing Dou, and Patrick Valduriez. 2024{\natexlab{a}}.
\newblock Aedfl: efficient asynchronous decentralized federated learning with heterogeneous devices.
\newblock In \emph{SIAM Int. Conf. on Data Mining ({SDM})}, pages 833--841. SIAM.

\bibitem[{Liu et~al.(2024{\natexlab{b}})Liu, Chen, Li, Sun, Song, Zhou, Jing, and Dou}]{liu2024enhancing}
Ji~Liu, Chunlu Chen, Yu~Li, Lin Sun, Yulun Song, Jingbo Zhou, Bo~Jing, and Dejing Dou. 2024{\natexlab{b}}.
\newblock Enhancing trust and privacy in distributed networks: a comprehensive survey on blockchain-based federated learning.
\newblock \emph{Knowledge and Information Systems}, 66(8):4377--4403.

\bibitem[{Liu et~al.(2022{\natexlab{a}})Liu, Huang, Zhou, Li, Ji, Xiong, and Dou}]{liu2022distributed}
Ji~Liu, Jizhou Huang, Yang Zhou, Xuhong Li, Shilei Ji, Haoyi Xiong, and Dejing Dou. 2022{\natexlab{a}}.
\newblock From distributed machine learning to federated learning: A survey.
\newblock \emph{Knowledge and Information Systems}, 64(4):885--917.

\bibitem[{Liu et~al.(2024{\natexlab{c}})Liu, Jia, Che, Huo, Ren, Zhou, Dai, and Dou}]{liu2024fedasmu}
Ji~Liu, Juncheng Jia, Tianshi Che, Chao Huo, Jiaxiang Ren, Yang Zhou, Huaiyu Dai, and Dejing Dou. 2024{\natexlab{c}}.
\newblock Fedasmu: Efficient asynchronous federated learning with dynamic staleness-aware model update.
\newblock In \emph{AAAI Conference on Artificial Intelligence}, volume~38, pages 13900--13908.

\bibitem[{Liu et~al.(2022{\natexlab{b}})Liu, Jia, Ma, Zhou, Zhou, Zhou, Dai, and Dou}]{liu2022multi}
Ji~Liu, Juncheng Jia, Beichen Ma, Chendi Zhou, Jingbo Zhou, Yang Zhou, Huaiyu Dai, and Dejing Dou. 2022{\natexlab{b}}.
\newblock Multi-job intelligent scheduling with cross-device federated learning.
\newblock \emph{IEEE Transactions on Parallel and Distributed Systems ({TPDS})}, 34(2):535--551.

\bibitem[{Liu et~al.(2024{\natexlab{d}})Liu, Jia, Zhang, Yun, Wang, Zhou, Dai, and Dou}]{liu2024efficient}
Ji~Liu, Juncheng Jia, Hong Zhang, Yuhui Yun, Leye Wang, Yang Zhou, Huaiyu Dai, and Dejing Dou. 2024{\natexlab{d}}.
\newblock Efficient federated learning using dynamic update and adaptive pruning with momentum on shared server data.
\newblock \emph{ACM Transactions on Intelligent Systems and Technology}, 15(6):1--28.

\bibitem[{Liu et~al.(2015)Liu, Pacitti, Valduriez, and Mattoso}]{liu2015survey}
Ji~Liu, Esther Pacitti, Patrick Valduriez, and Marta Mattoso. 2015.
\newblock A survey of data-intensive scientific workflow management.
\newblock \emph{Journal of Grid Computing}, 13:457--493.

\bibitem[{Liu et~al.(2024{\natexlab{e}})Liu, Ren, Jin, Zhang, Zhou, Valduriez, and Dou}]{Liu2024Fisher}
Ji~Liu, Jiaxiang Ren, Ruoming Jin, Zijie Zhang, Yang Zhou, Patrick Valduriez, and Dejing Dou. 2024{\natexlab{e}}.
\newblock Fisher information-based efficient curriculum federated learning with large language models.
\newblock In \emph{Conf. on Empirical Methods in Natural Language Processing ({EMNLP})}, pages 10497--10523.

\bibitem[{Liu et~al.(2023{\natexlab{a}})Liu, Wu, Feng, Zhang, Wu, Yao, Yu, Ma, Zhao, and Dou}]{liu2023heterps}
Ji~Liu, Zhihua Wu, Danlei Feng, Minxu Zhang, Xinxuan Wu, Xuefeng Yao, Dianhai Yu, Yanjun Ma, Feng Zhao, and Dejing Dou. 2023{\natexlab{a}}.
\newblock Heterps: Distributed deep learning with reinforcement learning based scheduling in heterogeneous environments.
\newblock \emph{Future Generation Computer Systems}, 148:106--117.

\bibitem[{Liu et~al.(2023{\natexlab{b}})Liu, Zhou, Mo, Ji, Liao, Li, Gu, and Dou}]{liu2023distributed}
Ji~Liu, Xuehai Zhou, Lei Mo, Shilei Ji, Yuan Liao, Zheng Li, Qin Gu, and Dejing Dou. 2023{\natexlab{b}}.
\newblock Distributed and deep vertical federated learning with big data.
\newblock \emph{Concurrency and Computation: Practice and Experience}, 35(21):e7697.

\bibitem[{Liu et~al.(2025)Liu, Zhu, Bai, He, Liao, Que, Wang, Zhang, Zhang, Zhang et~al.}]{liu2025comprehensive}
Jiaheng Liu, Dawei Zhu, Zhiqi Bai, Yancheng He, Huanxuan Liao, Haoran Que, Zekun Wang, Chenchen Zhang, Ge~Zhang, Jiebin Zhang, et~al. 2025.
\newblock A comprehensive survey on long context language modeling.
\newblock \emph{arXiv preprint arXiv:2503.17407}.

\bibitem[{Liu et~al.(2024{\natexlab{f}})Liu, Bai, Han, Weng, Xu, Cao, Wang, and Cai}]{liu2024lddpo}
Wei Liu, Yang Bai, Chengcheng Han, Rongxiang Weng, Jun Xu, Xuezhi Cao, Jingang Wang, and Xunliang Cai. 2024{\natexlab{f}}.
\newblock Length desensitization in direct preference optimization.
\newblock \emph{arXiv preprint arXiv:2409.06411}.

\bibitem[{Lu et~al.(2024)Lu, Li, An, Zhao, He, Yin, and Sun}]{Lu2024EliminatingBL}
Junru Lu, Jiazheng Li, Siyu An, Meng Zhao, Yulan He, Di~Yin, and Xing Sun. 2024.
\newblock Eliminating biased length reliance of direct preference optimization via down-sampled kl divergence.
\newblock \emph{arXiv preprint arXiv:2406.10957}.

\bibitem[{Meng et~al.(2024)Meng, Xia, and Chen}]{meng2024simpo}
Yu~Meng, Mengzhou Xia, and Danqi Chen. 2024.
\newblock Simpo: Simple preference optimization with a reference-free reward.
\newblock In \emph{NeurIPS}.

\bibitem[{OpenAI(2023)}]{openai23_gpt4}
OpenAI. 2023.
\newblock {GPT-4} technical report.
\newblock \emph{arXiv preprint arXiv:2303.08774}.

\bibitem[{OpenAI(2024)}]{GPT_4o_blog}
OpenAI. 2024.
\newblock \href {https://openai.com/index/hello-gpt-4o/} {Hello gpt-4o}.

\bibitem[{Ouyang et~al.(2022)Ouyang, Wu, Jiang, Almeida, Wainwright, Mishkin, Zhang, Agarwal, Slama, Ray, Schulman, Hilton, Kelton, Miller, Simens, Askell, Welinder, Christiano, Leike, and Lowe}]{ouyang22intructions}
Long Ouyang, Jeff Wu, Xu~Jiang, Diogo Almeida, Carroll~L. Wainwright, Pamela Mishkin, Chong Zhang, Sandhini Agarwal, Katarina Slama, Alex Ray, John Schulman, Jacob Hilton, Fraser Kelton, Luke Miller, Maddie Simens, Amanda Askell, Peter Welinder, Paul Christiano, Jan Leike, and Ryan Lowe. 2022.
\newblock Training language models to follow instructions with human feedback.
\newblock In \emph{NeurIPS}.

\bibitem[{Pal et~al.(2024)Pal, Karkhanis, Dooley, Roberts, Naidu, and White}]{Pal2024SmaugFF}
Arka Pal, Deep Karkhanis, Samuel Dooley, Manley Roberts, Siddartha Naidu, and Colin White. 2024.
\newblock Smaug: Fixing failure modes of preference optimisation with dpo-positive.
\newblock \emph{arXiv preprint arXiv:2402.13228}.

\bibitem[{Park et~al.(2024)Park, Rafailov, Ermon, and Finn}]{park-etal-2024-disentangling}
Ryan Park, Rafael Rafailov, Stefano Ermon, and Chelsea Finn. 2024.
\newblock Disentangling length from quality in direct preference optimization.
\newblock In \emph{ACL}.

\bibitem[{Rafailov et~al.(2023)Rafailov, Sharma, Mitchell, Manning, Ermon, and Finn}]{rafailov2024direct}
Rafael Rafailov, Archit Sharma, Eric Mitchell, Christopher~D Manning, Stefano Ermon, and Chelsea Finn. 2023.
\newblock Direct preference optimization: Your language model is secretly a reward model.
\newblock In \emph{NeurIPS}.

\bibitem[{Razin et~al.(2025)Razin, Malladi, Bhaskar, Chen, Arora, and Hanin}]{razin2025unintentional}
Noam Razin, Sadhika Malladi, Adithya Bhaskar, Danqi Chen, Sanjeev Arora, and Boris Hanin. 2025.
\newblock Unintentional unalignment: Likelihood displacement in direct preference optimization.
\newblock In \emph{ICLR}.

\bibitem[{Stiennon et~al.(2020)Stiennon, Ouyang, Wu, Ziegler, Lowe, Voss, Radford, Amodei, and Christiano}]{Stiennon@LearningFromHF@2020}
Nisan Stiennon, Long Ouyang, Jeff Wu, Daniel~M. Ziegler, Ryan Lowe, Chelsea Voss, Alec Radford, Dario Amodei, and Paul Christiano. 2020.
\newblock Learning to summarize from human feedback.
\newblock In \emph{NeurIPS}.

\bibitem[{Xiao et~al.(2025)Xiao, Yuan, Chen, Li, Liang, Ren, and Honavar}]{simper2025}
Teng Xiao, Yige Yuan, Zhengyu Chen, Mingxiao Li, Shangsong Liang, Zhaochun Ren, and Vasant~G Honavar. 2025.
\newblock Simper: A minimalist approach to preference alignment without hyperparameters.
\newblock In \emph{ICLR}.

\bibitem[{Xiao et~al.(2024)Xiao, Yuan, Zhu, Li, and Honavar}]{Cal-DPO2024}
Teng Xiao, Yige Yuan, Huaisheng Zhu, Mingxiao Li, and Vasant~G Honavar. 2024.
\newblock Cal-dpo: Calibrated direct preference optimization for language model alignment.
\newblock In \emph{NeurIPS}.

\bibitem[{Xu et~al.(2024)Xu, Sharaf, Chen, Tan, Shen, Van~Durme, Murray, and Kim}]{xu24cpo}
Haoran Xu, Amr Sharaf, Yunmo Chen, Weiting Tan, Lingfeng Shen, Benjamin Van~Durme, Kenton Murray, and Young~Jin Kim. 2024.
\newblock Contrastive preference optimization: pushing the boundaries of llm performance in machine translation.
\newblock In \emph{ICML}.

\bibitem[{Yang et~al.(2024)Yang, Yang, Hui, Zheng, Yu, Zhou, Li, Li, Liu, Huang, Dong, Wei, Lin, Tang, Wang, Yang, Tu, Zhang, Ma, Yang, Xu, Zhou, Bai, He, Lin, Dang, Lu, Chen, Yang, Li, Xue, Ni, Zhang, Wang, Peng, Men, Gao, Lin, Wang, Bai, Tan, Zhu, Li, Liu, Ge, Deng, Zhou, Ren, Zhang, Wei, Ren, Liu, Fan, Yao, Zhang, Wan, Chu, Liu, Cui, Zhang, Guo, and Fan}]{qwen2}
An~Yang, Baosong Yang, Binyuan Hui, Bo~Zheng, Bowen Yu, Chang Zhou, Chengpeng Li, Chengyuan Li, Dayiheng Liu, Fei Huang, Guanting Dong, Haoran Wei, Huan Lin, Jialong Tang, Jialin Wang, Jian Yang, Jianhong Tu, Jianwei Zhang, Jianxin Ma, Jianxin Yang, Jin Xu, Jingren Zhou, Jinze Bai, Jinzheng He, Junyang Lin, Kai Dang, Keming Lu, Keqin Chen, Kexin Yang, Mei Li, Mingfeng Xue, Na~Ni, Pei Zhang, Peng Wang, Ru~Peng, Rui Men, Ruize Gao, Runji Lin, Shijie Wang, Shuai Bai, Sinan Tan, Tianhang Zhu, Tianhao Li, Tianyu Liu, Wenbin Ge, Xiaodong Deng, Xiaohuan Zhou, Xingzhang Ren, Xinyu Zhang, Xipin Wei, Xuancheng Ren, Xuejing Liu, Yang Fan, Yang Yao, Yichang Zhang, Yu~Wan, Yunfei Chu, Yuqiong Liu, Zeyu Cui, Zhenru Zhang, Zhifang Guo, and Zhihao Fan. 2024.
\newblock Qwen2 technical report.
\newblock \emph{arXiv preprint arXiv:2407.10671}.

\bibitem[{Zeng et~al.(2024)Zeng, Liu, Ma, Yang, Zhang, and Wang}]{zeng2024tokenlevel}
Yongcheng Zeng, Guoqing Liu, Weiyu Ma, Ning Yang, Haifeng Zhang, and Jun Wang. 2024.
\newblock Token-level direct preference optimization.
\newblock \emph{arXiv preprint arXiv:2404.11999}.

\bibitem[{Zhang et~al.(2022)Zhang, Liu, Jia, Zhou, Dai, and Dou}]{Zhang2022FedDUAP}
Hong Zhang, Ji~Liu, Juncheng Jia, Yang Zhou, Huaiyu Dai, and Dejing Dou. 2022.
\newblock Fedduap: Federated learning with dynamic update and adaptive pruning using shared data on the server.
\newblock In \emph{Int. Joint Conf. on Artificial Intelligence ({IJCAI})}, pages 2776--2782.

\bibitem[{Zheng et~al.(2023)Zheng, Chiang, Sheng, Zhuang, Wu, Zhuang, Lin, Li, Li, Xing, Zhang, Gonzalez, and Stoica}]{zheng2023judging}
Lianmin Zheng, Wei-Lin Chiang, Ying Sheng, Siyuan Zhuang, Zhanghao Wu, Yonghao Zhuang, Zi~Lin, Zhuohan Li, Dacheng Li, Eric.~P Xing, Hao Zhang, Joseph~E. Gonzalez, and Ion Stoica. 2023.
\newblock Judging llm-as-a-judge with mt-bench and chatbot arena.
\newblock \emph{arXiv preprint arXiv:2306.05685}.

\bibitem[{Zhou et~al.(2022)Zhou, Liu, Jia, Zhou, Zhou, Dai, and Dou}]{zhou2022efficient}
Chendi Zhou, Ji~Liu, Juncheng Jia, Jingbo Zhou, Yang Zhou, Huaiyu Dai, and Dejing Dou. 2022.
\newblock Efficient device scheduling with multi-job federated learning.
\newblock In \emph{AAAI Conf. on Artificial Intelligence}, volume~36, pages 9971--9979.

\bibitem[{Zhou et~al.(2023)Zhou, Lu, Mishra, Brahma, Basu, Luan, Zhou, and Hou}]{zhou2023instructionfollowing}
Jeffrey Zhou, Tianjian Lu, Swaroop Mishra, Siddhartha Brahma, Sujoy Basu, Yi~Luan, Denny Zhou, and Le~Hou. 2023.
\newblock Instruction-following evaluation for large language models.
\newblock \emph{arXiv preprint arXiv:2311.07911}.

\bibitem[{Zhu et~al.(2025)Zhu, Wang, and Wu}]{zhu2025csdm}
Wenqiao Zhu, Lulu Wang, and Jun Wu. 2025.
\newblock Addressing cold-start problem in click-through rate prediction via supervised diffusion modeling.
\newblock In \emph{AAAI}.

\bibitem[{Zhu et~al.(2024)Zhu, Xu, Wang, and Wu}]{zhu2024psc}
Wenqiao Zhu, Chao Xu, Lulu Wang, and Jun Wu. 2024.
\newblock Psc: Extending context window of large language models via phase shift calibration.
\newblock In \emph{EMNLP}.

\end{thebibliography}
    \cleardoublepage
    \appendix
    
    \setcounter{theorem}{0}
    \setcounter{equation}{0}
    \section{Appendix}
    \subsection{Proof of \ref{thm:pilot_grad}, \ref{thm:grad}, and \ref{thm:ratio}}
    \label{app:proof}
    
    \begin{theorem}
    \label{app:thm:pilot_grad}
    The partial derivatives of { $l_\text{pilot}$} with respect to {$\mathcal{X}_1$} and { $\mathcal{X}_2$}
    are given by:
    \begin{small}
    \begin{equation}
    \frac{\partial l_{\text{pilot}}}{\partial \mathcal{X}_1} =
    \frac{
      \beta \mathcal{Y}_2^\beta
    }{
      \mathcal{X}_1 (\mathcal{X}_1^\beta + \mathcal{Y}_2^\beta) 
    }
    \end{equation}
    \end{small}
    
    \begin{small}
    \begin{equation}
    \frac{\partial l_{\text{pilot}}}{\partial \mathcal{X}_2} =
    -\frac{
      \beta \mathcal{X}_2^{\beta - 1}
    }{
      \mathcal{Y}_1^\beta +   \mathcal{X}_2^\beta
    }
    \end{equation}
    \end{small}
    \end{theorem}
    
    \begin{proof}
    By variable substitution, we have:
    \begin{small}
    \begin{align}
      l_{pilot}(\pi_\theta, \pi_\text{ref}) 
        &= \log \left(
        \frac{
          \mathcal{X}_1^\beta
        }{
          \mathcal{X}_1^\beta + \mathcal{Y}_2^\beta
        }
        \right) 
        + \log \left(
        \frac{
          \mathcal{Y}_1^\beta
        }{
          \mathcal{Y}_1^\beta + \mathcal{X}_2^\beta
        }
        \right)
    \end{align}
    \end{small}
    For {$\frac{\partial l_{\text{pilot}}}{\partial \mathcal{X}_1}$},
    \begin{small}
    \begin{align}
    \frac{\partial l_{\text{pilot}}}{\partial \mathcal{X}_1} & = 
        \frac{
          \mathcal{X}_1^\beta + \mathcal{Y}_2^\beta
        }{
          \mathcal{X}_1^\beta
        } \left(
        \frac{\beta \mathcal{X}_1^{\beta-1} }{\mathcal{X}_1^\beta + \mathcal{Y}_2^\beta} -
        \frac{\beta \mathcal{X}_1^{2\beta-1} }{(\mathcal{X}_1^\beta + \mathcal{Y}_2^\beta)^2}
        \right) \nonumber \\
        & = \frac{
      \beta \mathcal{Y}_2^\beta
    }{
      \mathcal{X}_1 (\mathcal{X}_1^\beta + \mathcal{Y}_2^\beta) 
    } 
    \end{align}
    \end{small}
    For {$\frac{\partial l_{\text{pilot}}}{\partial \mathcal{X}_2}$},
    \begin{small}
    \begin{align}
    \frac{\partial l_{\text{pilot}}}{\partial \mathcal{X}_2} & =
    \frac{\mathcal{Y}_1^\beta + \mathcal{X}_2^\beta}{\mathcal{Y}_1^\beta}
    \frac{-\mathcal{Y}_1^\beta \beta \mathcal{X}_2^{\beta-1}}{(\mathcal{Y}_1^\beta + \mathcal{X}_2^\beta)^2} \nonumber\\
    & =  -\frac{
      \beta \mathcal{X}_2^{\beta - 1}
    }{
      \mathcal{Y}_1^\beta +   \mathcal{X}_2^\beta
    }
    \end{align}
    \end{small}
    \end{proof}
    
    \begin{theorem}
    \label{app:thm:grad}
    The partial derivative {
    $\lvert
    \frac{\partial l_{\text{pilot}}}{\partial \mathcal{X}_1}
    \rvert $
    } increases as {$\mathcal{Y}_2$} increases, while
    the partial derivative {$
    \lvert
    \frac{\partial l_{\text{pilot}}}{\partial \mathcal{X}_2}
    \rvert
    $} descreases as {$\mathcal{Y}_1$} increases.
    \end{theorem}
    
    \begin{proof}
    For {
    $\lvert
    \frac{\partial l_{\text{pilot}}}{\partial \mathcal{X}_1}
    \rvert $
    }, we have
    \begin{small}
    \begin{align}
    \frac{\partial
    \lvert\frac{\partial l_{\text{pilot}}}{\partial \mathcal{X}_1}\rvert
    }{\partial \mathcal{Y}_2
    }  & = \frac{
    \beta^2 \mathcal{Y}^{\beta-1} \mathcal{X}_1^\beta
    }{\mathcal{X}_1 (\mathcal{Y}_2^\beta + \mathcal{X}_1^\beta)^2} \\
      & > 0 
    \end{align}
    \end{small}
    
    For {
    $\lvert
    \frac{\partial l_{\text{pilot}}}{\partial \mathcal{X}_2}
    \rvert $
    }, we have
    \begin{small}
    \begin{align}
    \frac{\partial
    \lvert\frac{\partial l_{\text{pilot}}}{\partial \mathcal{X}_2}\rvert
    }{\partial \mathcal{Y}_1
    }  & = -\frac{
    \beta^2 \mathcal{X}_2 ^ {\beta-1} \mathcal{Y}_1^{\beta-1}
    }{(\mathcal{Y}_1^\beta + \mathcal{X}_2^\beta)^2} \\
      & < 0 
    \end{align}
    \end{small}
    \end{proof}
    
    \begin{theorem}
    \label{app:thm:ratio}
    Let
    { $\pi_\text{pilot} = \pi_\theta$} and
    { $z = \frac{\mathcal{Y}_1}{\mathcal{Y}_2} $},
    for each pairwise preference instance
    {
    $(x, y_w, y_l) \in \mathcal{D}$
    }, 
    the ratio of the increase in the probability of a human-preferred response to the decrease in the probability of a human-dispreferred response is given by:
    \begin{small}
    \begin{equation}
      \left \lvert \frac{\partial l_{\text{pilot}}}{\partial \mathcal{X}_1} /
    \frac{\partial l_{\text{pilot}}}{\partial \mathcal{X}_2}
    \right \rvert = \frac{\mathcal{X}_2}{\mathcal{X}_1} \cdot f(z),
    \label{app:eq:pilot_ratio}
    \end{equation}
    \end{small}
    where
    \begin{small}
    \begin{equation}
    f(z) = \frac{1}{p_2^\beta}
    \frac{z^\beta  + p_2^\beta}{p_1^\beta z^\beta  + 1}
    \end{equation}
    \end{small}
    is a monotonic function of
    {$z$}.
    When { $p_1 p_2 < 1$ },
    the function { $f(z)$ } is increasing.
    Conversely,  if { $p_1 p_2 > 1$ }, the function {$f(z)$ } is decreasing.
    Furthermore, { $f(z) > 1$ } if { $p_1 p_2 < 1$ }.
    \end{theorem}
    
    \begin{proof}
    \begin{small}
    \begin{align}
      \left \lvert \frac{\partial \mathcal{L}_{\text{pilot}}}{\partial \mathcal{X}_1} /
    \frac{\partial \mathcal{L}_{\text{pilot}}}{\partial \mathcal{X}_2}
    \right \rvert  & =
    \frac{\mathcal{Y}_2}{\mathcal{X}_1}
    \frac{\mathcal{Y}_2^{\beta-1}}{\mathcal{X}_2^{\beta-1}}
    \frac{\mathcal{Y}_1^\beta  + \mathcal{X}_2^\beta}{\mathcal{X}_1^\beta  + \mathcal{Y}_2^\beta} \\
    & = \frac{\mathcal{X}_2}{\mathcal{X}_1}
    \frac{\mathcal{Y}_2^\beta}{\mathcal{X}_2^\beta}
    \frac{\mathcal{Y}_1^\beta  + \mathcal{X}_2^\beta}{\mathcal{X}_1^\beta  + \mathcal{Y}_2^\beta} 
    \end{align}
    \end{small}
    Let {$\mathcal{X}_2 = p_2 \mathcal{Y}_2, \mathcal{X}_1 = p_1 \mathcal{Y}_1$}, and 
    {$z = \frac{\mathcal{Y}_1}{\mathcal{Y}_2}$}, we then have
    \begin{small}
    \begin{align}
      f(z) & = 
    \frac{\mathcal{Y}_2^\beta}{\mathcal{X}_2^\beta}
    \frac{\mathcal{Y}_1^\beta  + \mathcal{X}_2^\beta}{\mathcal{X}_1^\beta  + \mathcal{Y}_2^\beta} \nonumber \\
    & =
    \frac{\mathcal{Y}_2^\beta}{(p_2\mathcal{Y}_2)^\beta}
    \frac{\mathcal{Y}_1^\beta  + (p_2\mathcal{Y}_2)^\beta}{(p_1\mathcal{Y}_1)^\beta  + \mathcal{Y}_2^\beta} \nonumber \\
     & = \frac{1}{p_2^\beta}
    \frac{z^\beta  + p_2^\beta}{p_1^\beta z^\beta  + 1}
    \end{align}
    \end{small}
    
    The derivative of {$f(z)$} with respect to {$z$} is
    \begin{small}
    \begin{align}
      \frac{\partial f(z)}{\partial z} &\propto \beta z ^ {\beta - 1} (p_1^\beta z^\beta  + 1) -
      (z^\beta  + p_2^\beta) p_1^\beta \beta z^{\beta-1}  \nonumber \\
      &= \beta z ^ {\beta - 1} - \beta (p_1 p_2)^\beta z ^ {\beta-1} \nonumber\\
      &= \beta\left(1-(p_1p_2)^\beta\right) z ^ {\beta-1}
    \end{align}
    \end{small}
    Since {$z = \frac{\mathcal{Y}_1}{\mathcal{Y}_2} > 0$},
    whether {$\frac{\partial f(z)}{\partial z} > 0 $} or {$\frac{\partial f(z)}{\partial z} <  0 $} is contingent on the value of {$p_1 p_2$}.
    Therefore, if {$p_1 p_2 < 1$},
    the function {$f(z)$} is increasing.
    Conversely,  if {$p_1 p_2 > 1$}, the function {$f(z)$} is decreasing.
    \end{proof}

    \subsection{Experimental Setup}
    \label{app:exp}
    
    To ensure a fair comparison among different methods, we employ the same general settings for all baselines, which are detailed in Table \ref{tbl:settings}. Additionally, we set $\beta=0.1$ for all baselines. For the proposed \TheName{} method, we set $r_1 = r_2$ by default and performed a grid search over the range \{0.6, 0.7, $\cdots$, 1.0\}. Table \ref{app:tbl:r1r2} shows the parameters we select.
    We carry out our experiments on 4 A800-80G GPUs.

    \begin{table}[t]
    \begin{tabular}{cc}
    \toprule
    Model  &  $r_1, r_2$ \\
    \midrule
    Llama-3.1 instruct 8B &  $r_1=1.0$, $r_2=1.0$ \\
    Llama-3.1 Base  8B    &  $r_1=0.9$, $r_2=0.9$  \\
    Qwen-2 instruct 7B    &  $r_1=0.9$, $r_2=0.9$   \\
    Qwen-2 base 7B        &  $r_1=0.6$, $r_2=0.6$  \\
    \bottomrule
    \end{tabular}
    \caption{The hyper-parameters we used for \TheName{} in the experiments reported in Table \ref{tbl:exp}}
    \label{app:tbl:r1r2}
    \end{table}
    
    \begin{table}[t]
    \centering
    \begin{tabular}{cc cc cc}
    \toprule
    Phase & LR   & BS  & Epoch  &  LS       & WP  \\
    \midrule
    SFT   & 2e-5 & 128 &   3    &  cosine   & 0.1 \\
    PO    & 5e-7 & 128 &   1    &  cosine   & 0.1 \\
    \bottomrule
    \end{tabular}
    \caption{
    The general training settings for the Supervised Fine - Tuning (SFT) phase and Preference Optimization (PO) phase include Learning Rate (LR), Batch Size (BS), Epoch, Learning Rate Schedule (LS), and Warmup Phase (WP).}
    \label{tbl:settings}
    \end{table}
    
    As a large-scale, finely detailed, and diverse dataset, UltraFeedback dataset \citep{cui2023ultrafeedback} comprises approximately 64,000 prompts sourced from a wide array of origins. MT-Bench consists of a multi-turn question set with 80 questions designed to evaluate the capabilities of a model in multi-turn conversation and instruction-following. In our experimentation, we utilize a single-answer grading mode, where GPT-4 \citep{openai23_gpt4} assigns a score out of 10 for each turn. We report the average score per turn across our experiments.
    

    
    \begin{figure}[t]
    \includegraphics[width=\linewidth]{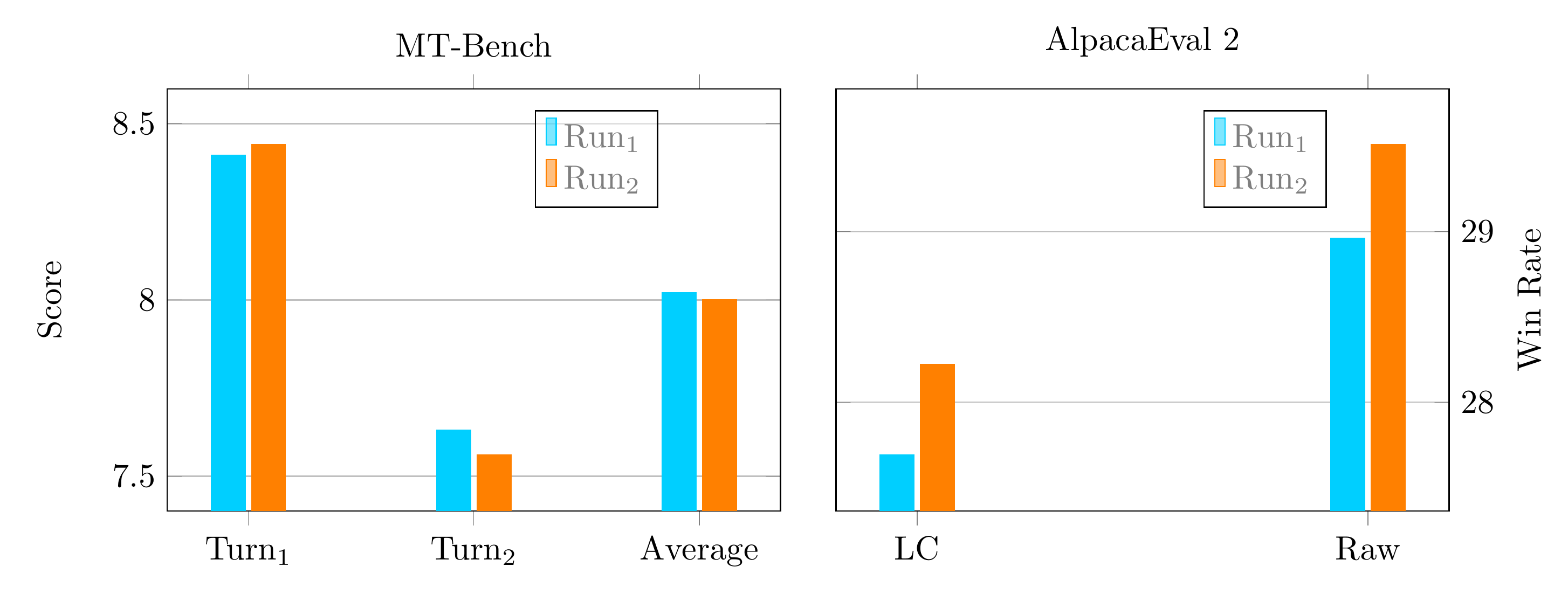} 
    \caption{Performance Metrics of Various Runs on MT-Bench and AlpacaEval-2.}
    \label{app:fig:bench_var}
    \end{figure}
    
    \begin{figure*}[t]
    \begin{center}
    \begin{tabular}{ccc}
      \includegraphics[width=0.29\linewidth]{figures/dpo_reward.pdf} & 
      \includegraphics[width=0.29\linewidth]{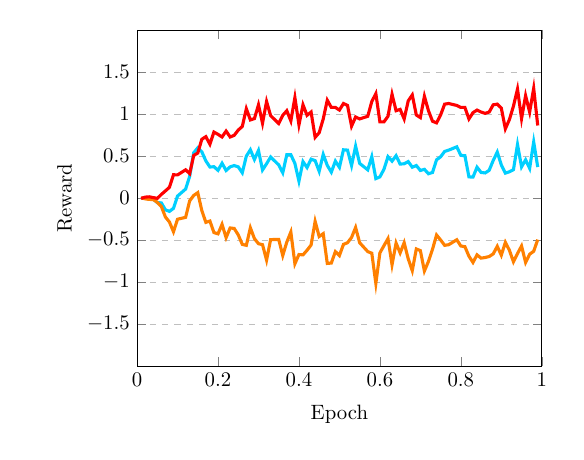} &
      \includegraphics[width=0.29\linewidth]{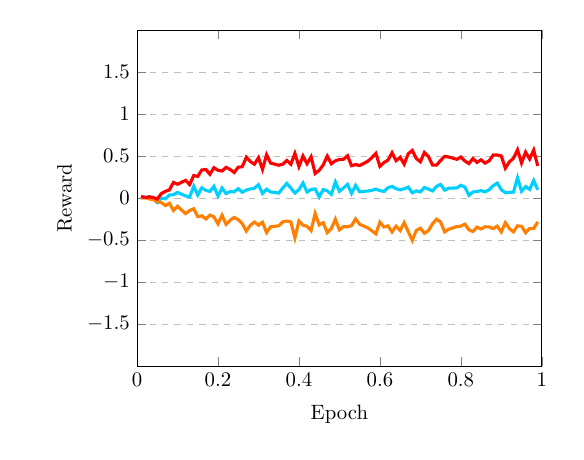}  \\
     {\qquad (a) DPO} & {\qquad (b)  \TheName{}} & {\qquad (c) NCA }\\ 
      \includegraphics[width=0.29\linewidth]{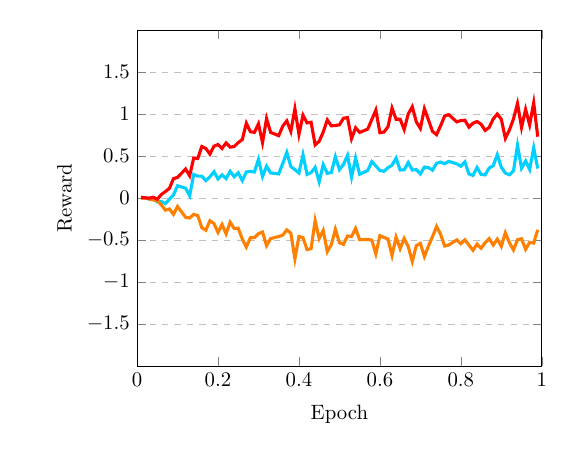} & 
      \includegraphics[width=0.29\linewidth]{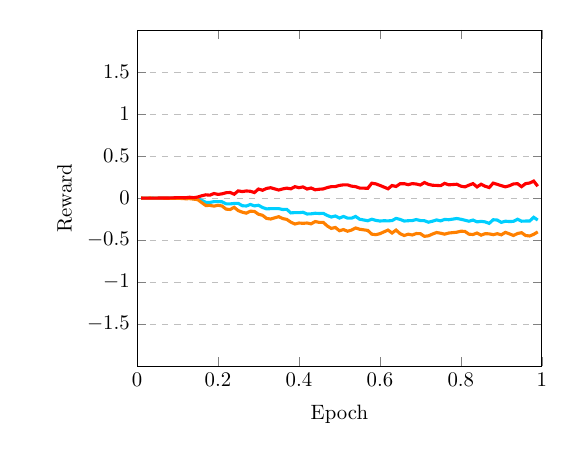} &
      \includegraphics[width=0.29\linewidth]{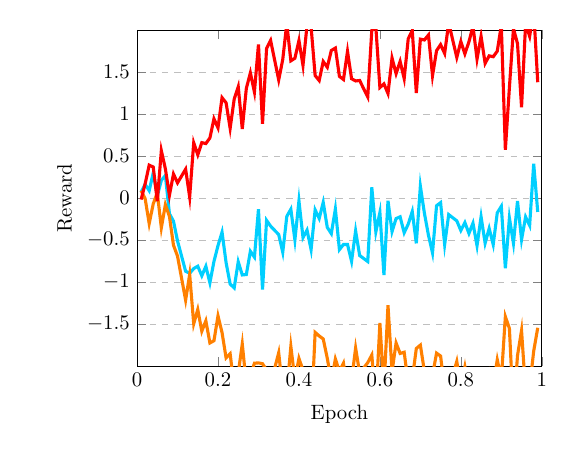} \\
    {\qquad  (d) BCO } & {\qquad (e) IPO } & {\qquad (f) SamPO}\\
    \end{tabular}
    \end{center}
    \caption{Training reward curves for the Llama-3.1 instruct 8B model using various alignment methods.}
    \label{app:fig:rewards}
    \end{figure*}

    \begin{figure*}[t]
    \begin{center}
    \begin{tabular}{ccc}
      \includegraphics[width=0.29\linewidth]{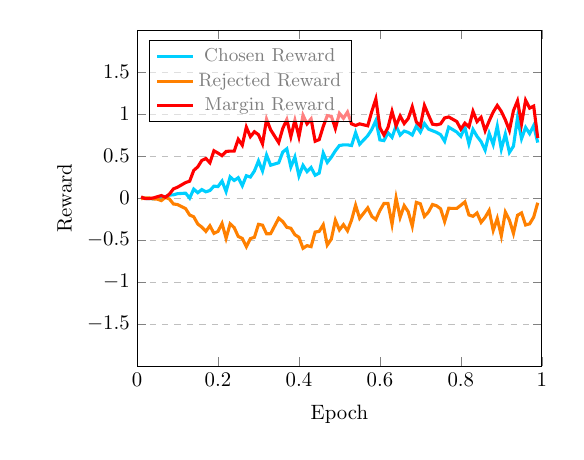} & 
      \includegraphics[width=0.29\linewidth]{figures/reward_pv4.pdf} &
      \includegraphics[width=0.29\linewidth]{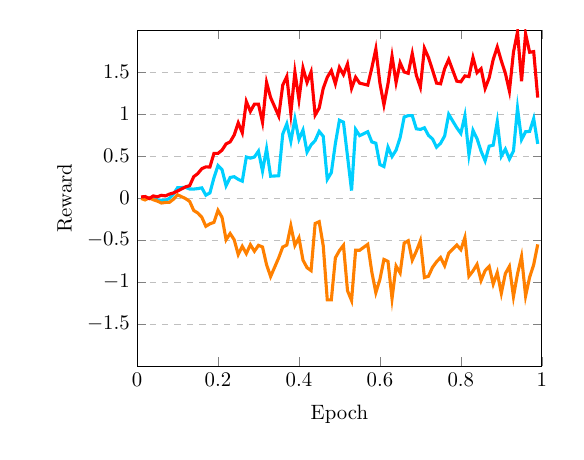}  \\
      {\qquad (a)} & {\qquad (b)} &  {\qquad (c)} \\ 
      \includegraphics[width=0.29\linewidth]{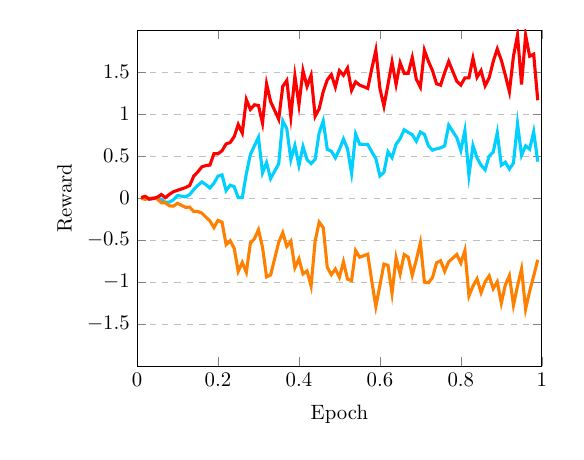} & 
      \includegraphics[width=0.29\linewidth]{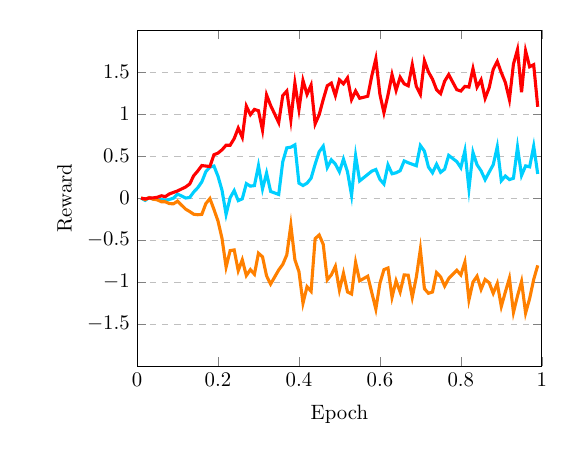} &
      \includegraphics[width=0.29\linewidth]{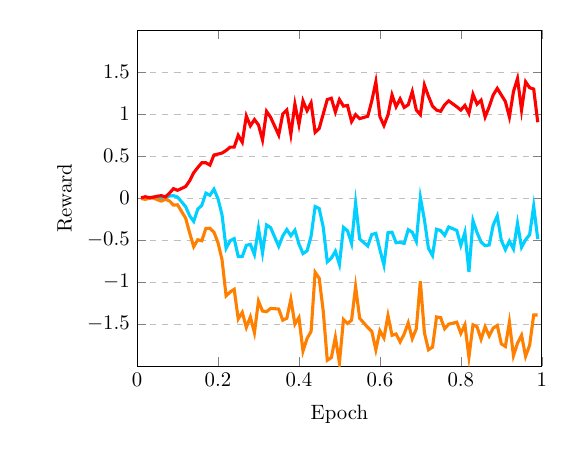}  \\ 
      {\qquad (d)} & {\qquad (e)} & {\qquad (f)}  \\
    \end{tabular}
    \end{center}
    \caption{
    Training reward curves for the Llama-3.1 base 8B model using the DPO and \TheName{} methods:
    (a) DPO.
    (b) \TheName{} with $r_1 = 0.6$ and $r_2 = 0.6$.
    (c) \TheName{} with $r_1 = 0.7$ and $r_2 = 0.7$.
    (d) \TheName{} with $r_1 = 0.8$ and $r_2 = 0.8$.
    (e) \TheName{} with $r_1 = 0.9$ and $r_2 = 0.9$.
    (f) \TheName{} with $r_1 = 1.0$ and $r_2 = 1.0$.
    }
    \label{app:fig:rewards_pv_r}
    \end{figure*}
    
\section{Complexity}

\TheName{} entails a novel technique where we resample subsequences from the probability distributions (logits) generated by the output layer. This process introduces supplementary computational stages into the workflow. Despite this added complexity, as detailed in Table \ref{app:tbl:time-cost}, the resultant increase in computational overhead remains modest (up to 0.4\%) additional computational time.
    
\section{More Experiments}

We also employ the AlpacaEval-2 \citep{alpaca_eval,dubois2024length} benchmark for evaluation. AlpacaEval-2 operates on a fixed set of 805 instructions, for which both the base model and the evaluated model generate responses. A GPT-based model then compares these responses to determine the win rate. In our experiments, we report both the length-controlled win rate and the raw win rate. We utilize the \textit{weighted\_alpaca\_eval\_gpt4\_turbo} configuration recommended by the AlpacaEval-2 library \citep{dubois2024length} for this evaluation. We report the results in Table \ref{app:tbl:more-exp}.
    
From the experimental results, we can observe that \TheName{} significantly outperforms the baselines in the LC win rate metric (up to 5.14\%) with Llama-3.1 instruct 8B. However, unlike the experiments on MT-Bench, \TheName{} does not surpass the baselines on Qwen-2 instruct 7B model. This indicates the effectiveness of alignment optimization might be benchmark-dependent. Conducting a rigorous evaluation of large language models remains a research direction of significant importance.
    
\begin{table*}[t]
\begin{small}
\centering
\begin{tabular}{c ccc ccc}
\toprule
\multirow{2}{*}{Methods} &
\multicolumn{3}{c}{Llama-3.1 instruct 8B} &
\multicolumn{3}{c}{Qwen-2 instruct 7B}\\
    \cmidrule(lr){2-7}
        & LC win rate    &  Raw win rate     & Token$_\text{len}$
        & LC win rate    &  Raw win rate     & Token$_\text{len}$ \\
    \cmidrule(lr){2-4}
    \cmidrule(lr){5-7}
    SFT  & 26.84 & 27.77 & 459 
         & 20.98 & 22.20 & 418 \\
    DPO \citep{rafailov2024direct} & 27.53 & 28.35 & 438
         & 24.26 & 24.50 & 414\\
    NCA \citep{chen2024noise}  & 26.33 & 27.77 & 441 
         & 21.94 & 21.75 & 409\\     
    BCO \citep{Jung2024bco} & 28.03 & \textbf{29.32} & 435
         & 23.76 & 23.95 & 411\\     
    IPO \citep{azar24aipo} & 27.07 & 25.82 & 459
         & \textbf{29.03} & 25.68 & 411\\     
    SamPO \citep{Lu2024EliminatingBL} & 27.45 & 27.69 & 443 
         & 24.57 & \textbf{26.60} & 426\\
    \midrule
    \TheName{} & \textbf{28.22} & 28.96 & 444 
         & 23.89 & 24.86 & 419\\          
    \bottomrule
    \end{tabular}
      \caption{AlpacaEval-2 Results across different model configurations. Token$_\text{len}$ indicates the average length of output tokens for each method.}
      \label{app:tbl:more-exp}
    \end{small}
    \end{table*}
    
    \begin{table}[t]
    \centering
    \begin{tabular}{cc}
    \toprule
    Method & Training Time \\
    \midrule
    DPO    & 6h22m22s \\
    \TheName{}  & 6h24m07s \\
    \bottomrule
    \end{tabular}
    \caption{
    Training time cost of DPO and \TheName{}.
    }
    \label{app:tbl:time-cost}
    \end{table}
    
\section{Training Reward Curves}
    
As discussed in Section \ref{sec:subsequence}, adjusting the values of $r_1$ and $r_2$ can affect the optimization process, resulting in different reward curve shapes. In Figure \ref{app:fig:rewards_pv_r}, we present the full training curves for \TheName{} and DPO. The results show that setting $r_1$ and $r_2$ to smaller values can lead to an increase in the magnitude of the reward values at the end of the fine-tuning stage. We also present the training reward curves of the baselines in Figure \ref{app:fig:rewards}.
    
\section{Variance}

In this paper, we carry out extensive experiments using both the MT-Bench and AlpacaEval-2 frameworks. Both MT-Bench and AlpacaEval-2 utilize GPT for evaluating responses, we investigate whether there are significant discrepancies in the assessments of GPT with identical content across different calls. To explore this, we conducted a test by querying GPT twice with the same response content and present our findings in Figure \ref{app:fig:bench_var}. The experimental results indicate that while MT-Bench yields relatively consistent outcomes with lower variance, AlpacaEval-2 demonstrates a notably higher variance under similar conditions.

\section{Future Work}
As discussed in the Limitations section, \TheName{} introduces additional computational steps. To address this, we aim to design a novel architecture for \TheName{} that reduces the associated computational overhead.  
We also plan to evaluate our method in long-context scenarios \citep{liu2025comprehensive,zhu2024psc} and recommendation systems \citep{zhu2025csdm}, as recommendations are inherently driven by user preferences.

Additionally, we intend to explore the applicability of \TheName{} in broader settings, such as learning with non-Independent and Identically Distributed (non-IID) data under federated learning frameworks. We also plan to investigate the use of diverse models or enhanced architectures within the policy framework—specifically, the \textit{pilot} model in \TheName{}—to further improve alignment performance. Finally, we aim to develop new self-guidance mechanisms for preference optimization and explore how \TheName{} can be leveraged to enhance the reasoning capabilities of large language models (LLMs).
    
    \begin{figure*}[t]
    \centering

    \begin{tabular}{cc}
      \includegraphics[width=0.45\linewidth]{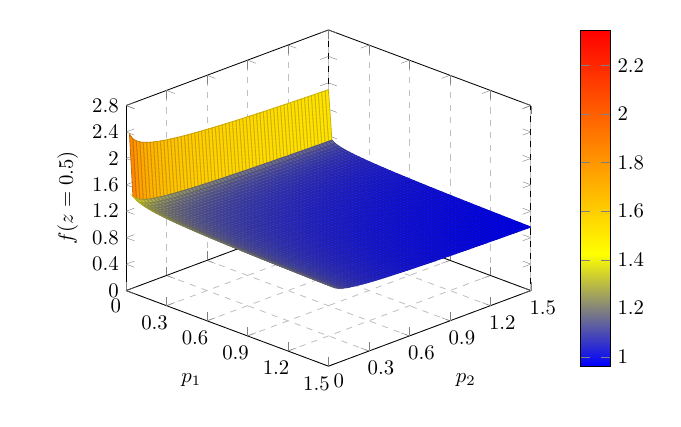} & 
      \includegraphics[width=0.45\linewidth]{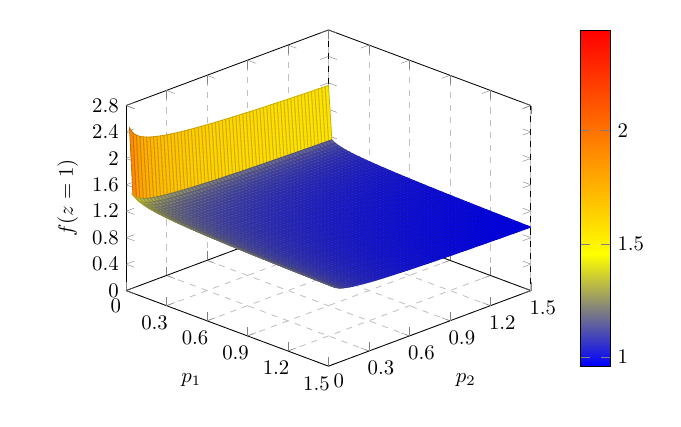}  \\  
     (a) $z=0.5$ & (b) $z = 1.0 $  \\
      \includegraphics[width=0.45\linewidth]{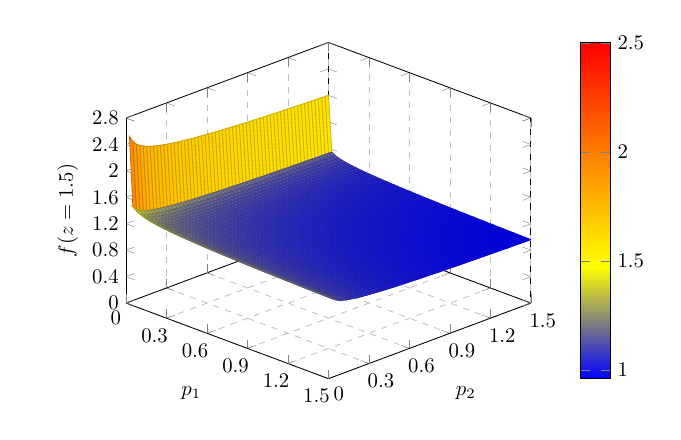} & 
      \includegraphics[width=0.45\linewidth]{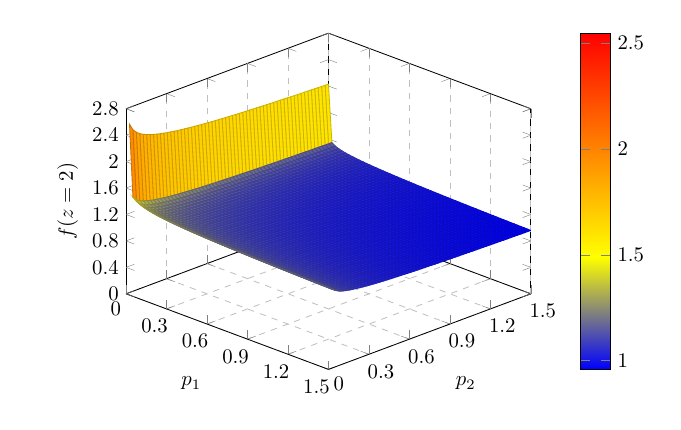} \\ 
     $z = 1.5 $ & (d) $z = 2.0 $ \\   
    \end{tabular}  
    \caption{Visual representation of the function $f(z)$ landscape.}
    \label{app:fig:fzs}
    \end{figure*}
    

\begin{figure*}[t]
    \centering
    \begin{tabular}{cc}
      \includegraphics[width=0.45\linewidth]{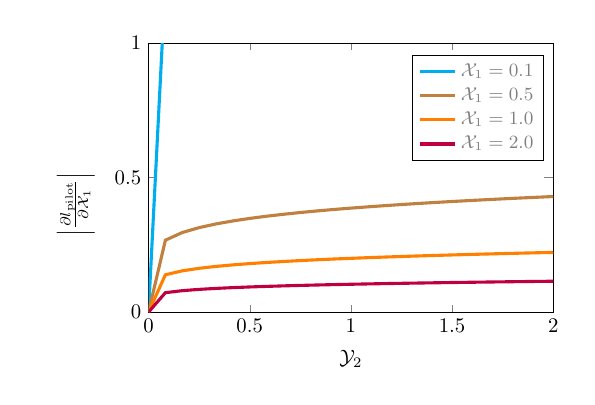}  &  
      \includegraphics[width=0.45\linewidth]{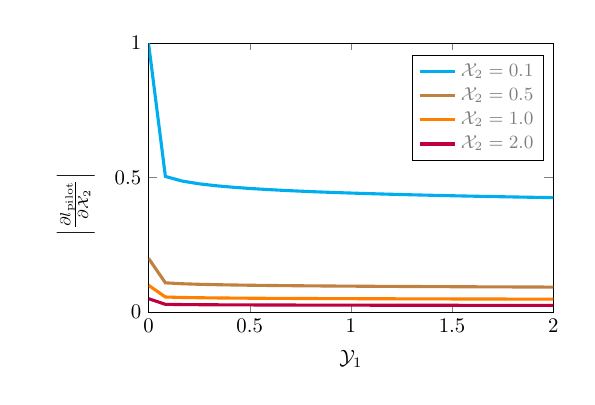}  \\
      (a) & (b)  \\
    \end{tabular}  
    \caption{Visual representation of the functions $\left\lvert \frac{\partial{l_\text{pilot}}}{\partial \mathcal{X}_1} \right\rvert$
     and $\left\lvert \frac{\partial{l_\text{pilot}}}{\partial \mathcal{X}_2} \right\rvert$ at selected fixed points.}
    \label{app:fig:partial-x}
    \end{figure*}

\end{document}


    \cleardoublepage
    \appendix
    
    \setcounter{theorem}{0}
    \setcounter{equation}{0}
    \section{Appendix}
    \subsection{Proof of \ref{thm:pilot_grad}, \ref{thm:grad}, and \ref{thm:ratio}}
    \label{app:proof}
    
    \begin{theorem}
    \label{app:thm:pilot_grad}
    The partial derivatives of { $l_\text{pilot}$} with respect to {$\mathcal{X}_1$} and { $\mathcal{X}_2$}
    are given by:
    \begin{small}
    \begin{equation}
    \frac{\partial l_{\text{pilot}}}{\partial \mathcal{X}_1} =
    \frac{
      \beta \mathcal{Y}_2^\beta
    }{
      \mathcal{X}_1 (\mathcal{X}_1^\beta + \mathcal{Y}_2^\beta) 
    }
    \end{equation}
    \end{small}
    
    \begin{small}
    \begin{equation}
    \frac{\partial l_{\text{pilot}}}{\partial \mathcal{X}_2} =
    -\frac{
      \beta \mathcal{X}_2^{\beta - 1}
    }{
      \mathcal{Y}_1^\beta +   \mathcal{X}_2^\beta
    }
    \end{equation}
    \end{small}
    \end{theorem}
    
    \begin{proof}
    By variable substitution, we have:
    \begin{small}
    \begin{align}
      l_{pilot}(\pi_\theta, \pi_\text{ref}) 
        &= \log \left(
        \frac{
          \mathcal{X}_1^\beta
        }{
          \mathcal{X}_1^\beta + \mathcal{Y}_2^\beta
        }
        \right) 
        + \log \left(
        \frac{
          \mathcal{Y}_1^\beta
        }{
          \mathcal{Y}_1^\beta + \mathcal{X}_2^\beta
        }
        \right)
    \end{align}
    \end{small}
    For {$\frac{\partial l_{\text{pilot}}}{\partial \mathcal{X}_1}$},
    \begin{small}
    \begin{align}
    \frac{\partial l_{\text{pilot}}}{\partial \mathcal{X}_1} & = 
        \frac{
          \mathcal{X}_1^\beta + \mathcal{Y}_2^\beta
        }{
          \mathcal{X}_1^\beta
        } \left(
        \frac{\beta \mathcal{X}_1^{\beta-1} }{\mathcal{X}_1^\beta + \mathcal{Y}_2^\beta} -
        \frac{\beta \mathcal{X}_1^{2\beta-1} }{(\mathcal{X}_1^\beta + \mathcal{Y}_2^\beta)^2}
        \right) \nonumber \\
        & = \frac{
      \beta \mathcal{Y}_2^\beta
    }{
      \mathcal{X}_1 (\mathcal{X}_1^\beta + \mathcal{Y}_2^\beta) 
    } 
    \end{align}
    \end{small}
    For {$\frac{\partial l_{\text{pilot}}}{\partial \mathcal{X}_2}$},
    \begin{small}
    \begin{align}
    \frac{\partial l_{\text{pilot}}}{\partial \mathcal{X}_2} & =
    \frac{\mathcal{Y}_1^\beta + \mathcal{X}_2^\beta}{\mathcal{Y}_1^\beta}
    \frac{-\mathcal{Y}_1^\beta \beta \mathcal{X}_2^{\beta-1}}{(\mathcal{Y}_1^\beta + \mathcal{X}_2^\beta)^2} \nonumber\\
    & =  -\frac{
      \beta \mathcal{X}_2^{\beta - 1}
    }{
      \mathcal{Y}_1^\beta +   \mathcal{X}_2^\beta
    }
    \end{align}
    \end{small}
    \end{proof}
    
    \begin{theorem}
    \label{app:thm:grad}
    The partial derivative {
    $\lvert
    \frac{\partial l_{\text{pilot}}}{\partial \mathcal{X}_1}
    \rvert $
    } increases as {$\mathcal{Y}_2$} increases, while
    the partial derivative {$
    \lvert
    \frac{\partial l_{\text{pilot}}}{\partial \mathcal{X}_2}
    \rvert
    $} descreases as {$\mathcal{Y}_1$} increases.
    \end{theorem}
    
    \begin{proof}
    For {
    $\lvert
    \frac{\partial l_{\text{pilot}}}{\partial \mathcal{X}_1}
    \rvert $
    }, we have
    \begin{small}
    \begin{align}
    \frac{\partial
    \lvert\frac{\partial l_{\text{pilot}}}{\partial \mathcal{X}_1}\rvert
    }{\partial \mathcal{Y}_2
    }  & = \frac{
    \beta^2 \mathcal{Y}^{\beta-1} \mathcal{X}_1^\beta
    }{\mathcal{X}_1 (\mathcal{Y}_2^\beta + \mathcal{X}_1^\beta)^2} \\
      & > 0 
    \end{align}
    \end{small}
    
    For {
    $\lvert
    \frac{\partial l_{\text{pilot}}}{\partial \mathcal{X}_2}
    \rvert $
    }, we have
    \begin{small}
    \begin{align}
    \frac{\partial
    \lvert\frac{\partial l_{\text{pilot}}}{\partial \mathcal{X}_2}\rvert
    }{\partial \mathcal{Y}_1
    }  & = -\frac{
    \beta^2 \mathcal{X}_2 ^ {\beta-1} \mathcal{Y}_1^{\beta-1}
    }{(\mathcal{Y}_1^\beta + \mathcal{X}_2^\beta)^2} \\
      & < 0 
    \end{align}
    \end{small}
    \end{proof}
    
    \begin{theorem}
    \label{app:thm:ratio}
    Let
    { $\pi_\text{pilot} = \pi_\theta$} and
    { $z = \frac{\mathcal{Y}_1}{\mathcal{Y}_2} $},
    for each pairwise preference instance
    {
    $(x, y_w, y_l) \in \mathcal{D}$
    }, 
    the ratio of the increase in the probability of a human-preferred response to the decrease in the probability of a human-dispreferred response is given by:
    \begin{small}
    \begin{equation}
      \left \lvert \frac{\partial l_{\text{pilot}}}{\partial \mathcal{X}_1} /
    \frac{\partial l_{\text{pilot}}}{\partial \mathcal{X}_2}
    \right \rvert = \frac{\mathcal{X}_2}{\mathcal{X}_1} \cdot f(z),
    \label{app:eq:pilot_ratio}
    \end{equation}
    \end{small}
    where
    \begin{small}
    \begin{equation}
    f(z) = \frac{1}{p_2^\beta}
    \frac{z^\beta  + p_2^\beta}{p_1^\beta z^\beta  + 1}
    \end{equation}
    \end{small}
    is a monotonic function of
    {$z$}.
    When { $p_1 p_2 < 1$ },
    the function { $f(z)$ } is increasing.
    Conversely,  if { $p_1 p_2 > 1$ }, the function {$f(z)$ } is decreasing.
    Furthermore, { $f(z) > 1$ } if { $p_1 p_2 < 1$ }.
    \end{theorem}
    
    \begin{proof}
    \begin{small}
    \begin{align}
      \left \lvert \frac{\partial \mathcal{L}_{\text{pilot}}}{\partial \mathcal{X}_1} /
    \frac{\partial \mathcal{L}_{\text{pilot}}}{\partial \mathcal{X}_2}
    \right \rvert  & =
    \frac{\mathcal{Y}_2}{\mathcal{X}_1}
    \frac{\mathcal{Y}_2^{\beta-1}}{\mathcal{X}_2^{\beta-1}}
    \frac{\mathcal{Y}_1^\beta  + \mathcal{X}_2^\beta}{\mathcal{X}_1^\beta  + \mathcal{Y}_2^\beta} \\
    & = \frac{\mathcal{X}_2}{\mathcal{X}_1}
    \frac{\mathcal{Y}_2^\beta}{\mathcal{X}_2^\beta}
    \frac{\mathcal{Y}_1^\beta  + \mathcal{X}_2^\beta}{\mathcal{X}_1^\beta  + \mathcal{Y}_2^\beta} 
    \end{align}
    \end{small}
    Let {$\mathcal{X}_2 = p_2 \mathcal{Y}_2, \mathcal{X}_1 = p_1 \mathcal{Y}_1$}, and 
    {$z = \frac{\mathcal{Y}_1}{\mathcal{Y}_2}$}, we then have
    \begin{small}
    \begin{align}
      f(z) & = 
    \frac{\mathcal{Y}_2^\beta}{\mathcal{X}_2^\beta}
    \frac{\mathcal{Y}_1^\beta  + \mathcal{X}_2^\beta}{\mathcal{X}_1^\beta  + \mathcal{Y}_2^\beta} \nonumber \\
    & =
    \frac{\mathcal{Y}_2^\beta}{(p_2\mathcal{Y}_2)^\beta}
    \frac{\mathcal{Y}_1^\beta  + (p_2\mathcal{Y}_2)^\beta}{(p_1\mathcal{Y}_1)^\beta  + \mathcal{Y}_2^\beta} \nonumber \\
     & = \frac{1}{p_2^\beta}
    \frac{z^\beta  + p_2^\beta}{p_1^\beta z^\beta  + 1}
    \end{align}
    \end{small}
    
    The derivative of {$f(z)$} with respect to {$z$} is
    \begin{small}
    \begin{align}
      \frac{\partial f(z)}{\partial z} &\propto \beta z ^ {\beta - 1} (p_1^\beta z^\beta  + 1) -
      (z^\beta  + p_2^\beta) p_1^\beta \beta z^{\beta-1}  \nonumber \\
      &= \beta z ^ {\beta - 1} - \beta (p_1 p_2)^\beta z ^ {\beta-1} \nonumber\\
      &= \beta\left(1-(p_1p_2)^\beta\right) z ^ {\beta-1}
    \end{align}
    \end{small}
    Since {$z = \frac{\mathcal{Y}_1}{\mathcal{Y}_2} > 0$},
    whether {$\frac{\partial f(z)}{\partial z} > 0 $} or {$\frac{\partial f(z)}{\partial z} <  0 $} is contingent on the value of {$p_1 p_2$}.
    Therefore, if {$p_1 p_2 < 1$},
    the function {$f(z)$} is increasing.
    Conversely,  if {$p_1 p_2 > 1$}, the function {$f(z)$} is decreasing.
    \end{proof}

    \subsection{Experimental Setup}
    \label{app:exp}
    
    To ensure a fair comparison among different methods, we employ the same general settings for all baselines, which are detailed in Table \ref{tbl:settings}. Additionally, we set $\beta=0.1$ for all baselines. For the proposed \TheName{} method, we set $r_1 = r_2$ by default and performed a grid search over the range \{0.6, 0.7, $\cdots$, 1.0\}. Table \ref{app:tbl:r1r2} shows the parameters we select.
    We carry out our experiments on 4 A800-80G GPUs.

    \begin{table}[t]
    \begin{tabular}{cc}
    \toprule
    Model  &  $r_1, r_2$ \\
    \midrule
    Llama-3.1 instruct 8B &  $r_1=1.0$, $r_2=1.0$ \\
    Llama-3.1 Base  8B    &  $r_1=0.9$, $r_2=0.9$  \\
    Qwen-2 instruct 7B    &  $r_1=0.9$, $r_2=0.9$   \\
    Qwen-2 base 7B        &  $r_1=0.6$, $r_2=0.6$  \\
    \bottomrule
    \end{tabular}
    \caption{The hyper-parameters we used for \TheName{} in the experiments reported in Table \ref{tbl:exp}}
    \label{app:tbl:r1r2}
    \end{table}
    
    \begin{table}[t]
    \centering
    \begin{tabular}{cc cc cc}
    \toprule
    Phase & LR   & BS  & Epoch  &  LS       & WP  \\
    \midrule
    SFT   & 2e-5 & 128 &   3    &  cosine   & 0.1 \\
    PO    & 5e-7 & 128 &   1    &  cosine   & 0.1 \\
    \bottomrule
    \end{tabular}
    \caption{
    The general training settings for the Supervised Fine - Tuning (SFT) phase and Preference Optimization (PO) phase include Learning Rate (LR), Batch Size (BS), Epoch, Learning Rate Schedule (LS), and Warmup Phase (WP).}
    \label{tbl:settings}
    \end{table}
    
    As a large-scale, finely detailed, and diverse dataset, UltraFeedback dataset \citep{cui2023ultrafeedback} comprises approximately 64,000 prompts sourced from a wide array of origins. MT-Bench consists of a multi-turn question set with 80 questions designed to evaluate the capabilities of a model in multi-turn conversation and instruction-following. In our experimentation, we utilize a single-answer grading mode, where GPT-4 \citep{openai23_gpt4} assigns a score out of 10 for each turn. We report the average score per turn across our experiments.
    

    
    \begin{figure}[t]
    \includegraphics[width=\linewidth]{figures/bench_var.pdf} 
    \caption{Performance Metrics of Various Runs on MT-Bench and AlpacaEval-2.}
    \label{app:fig:bench_var}
    \end{figure}
    
    \begin{figure*}[t]
    \begin{center}
    \begin{tabular}{ccc}
      \includegraphics[width=0.29\linewidth]{figures/dpo_reward.pdf} & 
      \includegraphics[width=0.29\linewidth]{figures/pdpo_v1_reward.pdf} &
      \includegraphics[width=0.29\linewidth]{figures/nca_reward.pdf}  \\
     {\qquad (a) DPO} & {\qquad (b)  \TheName{}} & {\qquad (c) NCA }\\ 
      \includegraphics[width=0.29\linewidth]{figures/bco_reward.pdf} & 
      \includegraphics[width=0.29\linewidth]{figures/ipo_reward.pdf} &
      \includegraphics[width=0.29\linewidth]{figures/sam1_reward.pdf} \\
    {\qquad  (d) BCO } & {\qquad (e) IPO } & {\qquad (f) SamPO}\\
    \end{tabular}
    \end{center}
    \caption{Training reward curves for the Llama-3.1 instruct 8B model using various alignment methods.}
    \label{app:fig:rewards}
    \end{figure*}

    \begin{figure*}[t]
    \begin{center}
    \begin{tabular}{ccc}
      \includegraphics[width=0.29\linewidth]{figures/reward_dpo_with_legend.pdf} & 
      \includegraphics[width=0.29\linewidth]{figures/reward_pv4.pdf} &
      \includegraphics[width=0.29\linewidth]{figures/reward_pv7.pdf}  \\
      {\qquad (a)} & {\qquad (b)} &  {\qquad (c)} \\ 
      \includegraphics[width=0.29\linewidth]{figures/reward_pv5.pdf} & 
      \includegraphics[width=0.29\linewidth]{figures/reward_pv6.pdf} &
      \includegraphics[width=0.29\linewidth]{figures/reward_pv2.pdf}  \\ 
      {\qquad (d)} & {\qquad (e)} & {\qquad (f)}  \\
    \end{tabular}
    \end{center}
    \caption{
    Training reward curves for the Llama-3.1 base 8B model using the DPO and \TheName{} methods:
    (a) DPO.
    (b) \TheName{} with $r_1 = 0.6$ and $r_2 = 0.6$.
    (c) \TheName{} with $r_1 = 0.7$ and $r_2 = 0.7$.
    (d) \TheName{} with $r_1 = 0.8$ and $r_2 = 0.8$.
    (e) \TheName{} with $r_1 = 0.9$ and $r_2 = 0.9$.
    (f) \TheName{} with $r_1 = 1.0$ and $r_2 = 1.0$.
    }
    \label{app:fig:rewards_pv_r}
    \end{figure*}
    
\section{Complexity}

\TheName{} entails a novel technique where we resample subsequences from the probability distributions (logits) generated by the output layer. This process introduces supplementary computational stages into the workflow. Despite this added complexity, as detailed in Table \ref{app:tbl:time-cost}, the resultant increase in computational overhead remains modest (up to 0.4\%) additional computational time.
    
\section{More Experiments}

We also employ the AlpacaEval-2 \citep{alpaca_eval,dubois2024length} benchmark for evaluation. AlpacaEval-2 operates on a fixed set of 805 instructions, for which both the base model and the evaluated model generate responses. A GPT-based model then compares these responses to determine the win rate. In our experiments, we report both the length-controlled win rate and the raw win rate. We utilize the \textit{weighted\_alpaca\_eval\_gpt4\_turbo} configuration recommended by the AlpacaEval-2 library \citep{dubois2024length} for this evaluation. We report the results in Table \ref{app:tbl:more-exp}.
    
From the experimental results, we can observe that \TheName{} significantly outperforms the baselines in the LC win rate metric (up to 5.14\%) with Llama-3.1 instruct 8B. However, unlike the experiments on MT-Bench, \TheName{} does not surpass the baselines on Qwen-2 instruct 7B model. This indicates the effectiveness of alignment optimization might be benchmark-dependent. Conducting a rigorous evaluation of large language models remains a research direction of significant importance.
    
\begin{table*}[t]
\begin{small}
\centering
\begin{tabular}{c ccc ccc}
\toprule
\multirow{2}{*}{Methods} &
\multicolumn{3}{c}{Llama-3.1 instruct 8B} &
\multicolumn{3}{c}{Qwen-2 instruct 7B}\\
    \cmidrule(lr){2-7}
        & LC win rate    &  Raw win rate     & Token$_\text{len}$
        & LC win rate    &  Raw win rate     & Token$_\text{len}$ \\
    \cmidrule(lr){2-4}
    \cmidrule(lr){5-7}
    SFT  & 26.84 & 27.77 & 459 
         & 20.98 & 22.20 & 418 \\
    DPO \citep{rafailov2024direct} & 27.53 & 28.35 & 438
         & 24.26 & 24.50 & 414\\
    NCA \citep{chen2024noise}  & 26.33 & 27.77 & 441 
         & 21.94 & 21.75 & 409\\     
    BCO \citep{Jung2024bco} & 28.03 & \textbf{29.32} & 435
         & 23.76 & 23.95 & 411\\     
    IPO \citep{azar24aipo} & 27.07 & 25.82 & 459
         & \textbf{29.03} & 25.68 & 411\\     
    SamPO \citep{Lu2024EliminatingBL} & 27.45 & 27.69 & 443 
         & 24.57 & \textbf{26.60} & 426\\
    \midrule
    \TheName{} & \textbf{28.22} & 28.96 & 444 
         & 23.89 & 24.86 & 419\\          
    \bottomrule
    \end{tabular}
      \caption{AlpacaEval-2 Results across different model configurations. Token$_\text{len}$ indicates the average length of output tokens for each method.}
      \label{app:tbl:more-exp}
    \end{small}
    \end{table*}
    
    \begin{table}[t]
    \centering
    \begin{tabular}{cc}
    \toprule
    Method & Training Time \\
    \midrule
    DPO    & 6h22m22s \\
    \TheName{}  & 6h24m07s \\
    \bottomrule
    \end{tabular}
    \caption{
    Training time cost of DPO and \TheName{}.
    }
    \label{app:tbl:time-cost}
    \end{table}
    
\section{Training Reward Curves}
    
As discussed in Section \ref{sec:subsequence}, adjusting the values of $r_1$ and $r_2$ can affect the optimization process, resulting in different reward curve shapes. In Figure \ref{app:fig:rewards_pv_r}, we present the full training curves for \TheName{} and DPO. The results show that setting $r_1$ and $r_2$ to smaller values can lead to an increase in the magnitude of the reward values at the end of the fine-tuning stage. We also present the training reward curves of the baselines in Figure \ref{app:fig:rewards}.
    
\section{Variance}

In this paper, we carry out extensive experiments using both the MT-Bench and AlpacaEval-2 frameworks. Both MT-Bench and AlpacaEval-2 utilize GPT for evaluating responses, we investigate whether there are significant discrepancies in the assessments of GPT with identical content across different calls. To explore this, we conducted a test by querying GPT twice with the same response content and present our findings in Figure \ref{app:fig:bench_var}. The experimental results indicate that while MT-Bench yields relatively consistent outcomes with lower variance, AlpacaEval-2 demonstrates a notably higher variance under similar conditions.

\section{Future Work}
As discussed in the Limitations section, \TheName{} introduces additional computational steps. To address this, we aim to design a novel architecture for \TheName{} that reduces the associated computational overhead.  
We also plan to evaluate our method in long-context scenarios \citep{liu2025comprehensive,zhu2024psc} and recommendation systems \citep{zhu2025csdm}, as recommendations are inherently driven by user preferences.

Additionally, we intend to explore the applicability of \TheName{} in broader settings, such as learning with non-Independent and Identically Distributed (non-IID) data under federated learning frameworks. We also plan to investigate the use of diverse models or enhanced architectures within the policy framework—specifically, the \textit{pilot} model in \TheName{}—to further improve alignment performance. Finally, we aim to develop new self-guidance mechanisms for preference optimization and explore how \TheName{} can be leveraged to enhance the reasoning capabilities of large language models (LLMs).
    
    \begin{figure*}[t]
    \centering

    \begin{tabular}{cc}
      \includegraphics[width=0.45\linewidth]{figures/f_z_eq_0_5.pdf} & 
      \includegraphics[width=0.45\linewidth]{figures/f_z_eq_1_0.pdf}  \\  
     (a) $z=0.5$ & (b) $z = 1.0 $  \\
      \includegraphics[width=0.45\linewidth]{figures/f_z_eq_1_5.pdf} & 
      \includegraphics[width=0.45\linewidth]{figures/f_z_eq_2_0.pdf} \\ 
     $z = 1.5 $ & (d) $z = 2.0 $ \\   
    \end{tabular}  
    \caption{Visual representation of the function $f(z)$ landscape.}
    \label{app:fig:fzs}
    \end{figure*}
    

\begin{figure*}[t]
    \centering
    \begin{tabular}{cc}
      \includegraphics[width=0.45\linewidth]{figures/p_x1.pdf}  &  
      \includegraphics[width=0.45\linewidth]{figures/p_x2.pdf}  \\
      (a) & (b)  \\
    \end{tabular}  
    \caption{Visual representation of the functions $\left\lvert \frac{\partial{l_\text{pilot}}}{\partial \mathcal{X}_1} \right\rvert$
     and $\left\lvert \frac{\partial{l_\text{pilot}}}{\partial \mathcal{X}_2} \right\rvert$ at selected fixed points.}
    \label{app:fig:partial-x}
    \end{figure*}